\pdfoutput=1
\documentclass[11pt, a4paper, logo]{gdm/googledeepmind}


\pdftrailerid{redacted}

\makeatletter
\renewcommand\bibentry[1]{\nocite{#1}{\frenchspacing\@nameuse{BR@r@#1\@extra@b@citeb}}}
\makeatother

\usepackage{lipsum}
\usepackage{dsfont}
\usepackage{gdm/gdm-colors}

\usepackage[authoryear, sort&compress, round]{natbib}
\usepackage{comment}
\graphicspath{{figures/}}

\usepackage{lineno}

\usepackage{amsmath,amsfonts,bm}

\def\eqref#1{equation~\ref{#1}}

\def\1{\bm{1}}

\DeclareMathAlphabet{\mathsfit}{\encodingdefault}{\sfdefault}{m}{sl}
\SetMathAlphabet{\mathsfit}{bold}{\encodingdefault}{\sfdefault}{bx}{n}

\usepackage{amsthm}
\newtheorem{ASS}{Assumption}[section]

\newcommand\TODO[1][]{{\color{orange}[TODO\ifthenelse{\equal{#1}{}}{}{: #1}]}}
\newcommand\SEC\section
\newcommand\SSEC\subsection
\newcommand\SSSEC\subsubsection

\usepackage[colorlinks=true,allcolors=blue]{hyperref}  
\hypersetup{
citecolor=[HTML]{2420cf}
}

\hypersetup{
linkcolor=[HTML]{2420cf}
}
\usepackage{url}

\usepackage[utf8]{inputenc} %
\usepackage[T1]{fontenc}    %
\usepackage{url}            %
\usepackage{booktabs}       %
\usepackage{amsfonts}       %
\usepackage{nicefrac}       %
\usepackage{microtype}      %
\usepackage{xcolor}         %

\usepackage{bbm}
\usepackage{pifont}

\usepackage{algpseudocode}
\usepackage[ruled,vlined,linesnumbered]{algorithm2e}

\usepackage{microtype}
\usepackage{graphicx}
\usepackage{booktabs} %
\usepackage{colortbl}

\usepackage{algorithm2e}

\usepackage{amsfonts}   %
\usepackage[table]{xcolor}
\usepackage[svgnames]{xcolor}
\usepackage[most]{tcolorbox}

\usepackage{enumitem}

\usepackage{amssymb}
\usepackage{mathtools}
\usepackage{amsthm}
\usepackage{amsmath}
\usepackage{multirow}
\usepackage{makecell}
\usepackage{appendix}    
\usepackage{enumitem}
\usepackage{nccmath}
\usepackage{wrapfig}
\definecolor{darkgreen}{rgb}{0.0, 0.5, 0.0} 
\usepackage[capitalize,noabbrev]{cleveref}

\renewcommand{\eqref}[1]{Eq.~\ref{#1}}

\theoremstyle{plain}
\newtheorem{theorem}{Theorem}[section]
\newtheorem{proposition}[theorem]{Proposition}

\theoremstyle{definition}
\newtheorem{definition}[theorem]{Definition}

\theoremstyle{remark}
\newtheorem{remark}[theorem]{Remark}

\usepackage{marvosym}
\usepackage{caption}
\usepackage[most]{tcolorbox}
\usepackage{amssymb}
\usepackage{bm}

\newcommand{\ours}{\textsf{RecursiveMAS}\xspace}
\newcommand{\link}{RecursiveLink\xspace}

\newcommand{\gaincell}[1]{\cellcolor{green!66!black!20}{\textbf{#1}}}
\definecolor{impr}{RGB}{34, 139, 34}
\definecolor{lightred}{RGB}{255, 230, 230}
\definecolor{darkred}{RGB}{192, 0, 0}

\definecolor{bestbg}{HTML}{FFF2B2}   %
\definecolor{secondbg}{HTML}{E8F0FE} %
\definecolor{LavenderLight}{HTML}{C7C3F5}
\definecolor{LightCoral}{RGB}{240,128,128}
\definecolor{LightBlue}{RGB}{173,216,230}

\renewtcolorbox[auto counter, number within=section]{definition}[1]{
  enhanced,
  colback=orange!10!white,
  colframe=black!30,
  boxrule=0.6pt,
  top=7.5mm,
  bottom=.5mm,
  left=0.8mm,
  right=0.8mm,
  before skip=5pt, 
  after skip=10pt,
  fonttitle=\bfseries,
  coltitle=white,
  title={Definition~\thetcbcounter: #1},
  attach boxed title to top left={
    yshift=-7.8mm,
    xshift=1mm
  },
  boxed title style={
    colback=orange!70!black,
    rounded corners,
    boxrule=1pt,
    top=0.3pt, bottom=0.3pt, left=2pt, right=2pt
  }
}

\newtcolorbox{highlight}[1]{
  enhanced,
  colback=LightCoral!10,
  colframe=black!30,
  boxrule=0.6pt,
  top=7.5mm,
  bottom=.5mm,
  left=0.8mm,
  right=0.8mm,
  before skip=5pt,
  after skip=10pt,
  fonttitle=\bfseries,
  coltitle=white,
  title={#1},
  attach boxed title to top left={
    yshift=-7.8mm,
    xshift=1mm
  },
  boxed title style={
    colback=FireBrick,
    rounded corners,
    boxrule=1pt,
    top=0.3pt,
    bottom=0.3pt,
    left=2pt,
    right=2pt
  }
}

\tcolorboxenvironment{theorem}{
    colback=LightBlue!20,       %
    colframe=black,     %
    arc=1mm,               %
    top=.6mm,
    bottom=.6mm,
    right=1mm,
    boxrule=.5pt,           %
    fonttitle=\bfseries,   %
    left=1mm,              %
    attach boxed title to top left={yshift=-2mm, xshift=4mm},
    boxed title style={
        colback=black!40,
        arc=1mm,
    },
    before skip=6pt,   %
    after skip=6pt,    %
}

\tcolorboxenvironment{proposition}{
    colback=LightBlue!20,       %
    colframe=black,     %
    arc=1mm,               %
    top=.6mm,
    bottom=.6mm,
    right=1mm,
    boxrule=.5pt,           %
    fonttitle=\bfseries,   %
    left=1mm,              %
    attach boxed title to top left={yshift=-2mm, xshift=4mm},
    boxed title style={
        colback=black!40,
        arc=1mm,
    },
    before skip=6pt,   %
    after skip=6pt,    %
}

\tcolorboxenvironment{lemma}{
    colback=red!10,       %
    colframe=black!,     %
    arc=1mm,               %
    boxrule=1pt,           %
    fonttitle=\bfseries,   %
    left=1mm,              %
    attach boxed title to top left={yshift=-2mm, xshift=4mm},
    boxed title style={
        colback=black!40,
        arc=1mm,
    }
}

\tcolorboxenvironment{remark}{
    colback=LightCoral!10,       %
    colframe=black,     %
    arc=1mm,               %
    top=.6mm,
    bottom=.6mm,
    boxrule=.5pt,           %
    right=1mm,
    fonttitle=\bfseries,   %
    left=1mm,              %
    attach boxed title to top left={yshift=-2mm, xshift=4mm},
    boxed title style={
        colback=black!40,
        arc=1mm,
    },
    before skip=6pt,   %
    after skip=6pt,    %
}

\title{Recursive Multi-Agent Systems}

\correspondingauthor{jiaru@stanford.edu}

\renewcommand{\today}{}

\author[1 $\dagger$]{Jiaru Zou}
\author[2]{Rui Pan}
\author[2]{Ruizhong Qiu}
\author[1]{Pan Lu}
\author[3]{Shizhe Diao}
\author[3]{Jindong Jiang}
\author[2]{Hanghang Tong}
\author[2]{Tong Zhang}
\author[4]{Markus J. Buehler}
\author[2 \Letter]{Jingrui He}
\author[1 \Letter]{James Zou}

\affil[1]{Stanford University }
\affil[2]{University of Illinois Urbana-Champaign }
\affil[3]{NVIDIA }
\affil[4]{MIT \newline}
\affil[$\dagger$]{Project Lead }
\affil[\Letter]{Corresponding Authors}

\begin{abstract}
\vspace{-18pt}
{\fontsize{10pt}{10pt} \selectfont \raisebox{-0.1em}{\includegraphics[height=1em]{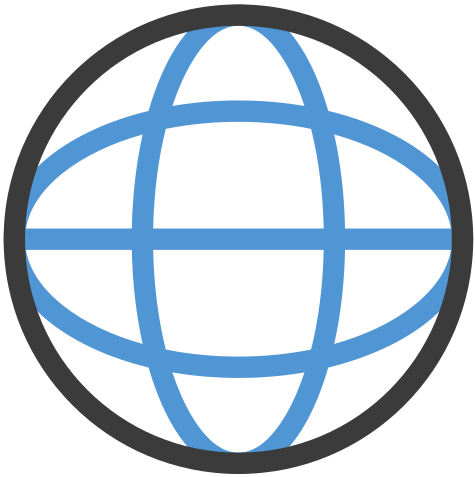}} Project Page: \href{https://recursivemas.github.io}{https://recursivemas.github.io}}
\\[0.6em]
Recursive or looped language models have recently emerged as a new scaling axis by iteratively refining the same model computation over latent states to deepen reasoning.
We extend such scaling principle from a single model to multi-agent systems, and ask: 
\textit{Can agent collaboration itself be scaled through recursion?} 
To this end, we introduce \textsf{RecursiveMAS}, a recursive multi-agent framework that casts the entire system as a unified latent-space recursive computation.
\textsf{RecursiveMAS} connects heterogeneous agents as a collaboration loop through the lightweight RecursiveLink module, enabling in-distribution latent thoughts generation and cross-agent latent state transfer.
To optimize our framework, we develop an inner-outer loop learning algorithm for iterative whole-system co-optimization through shared gradient-based credit assignment across recursion rounds.
Theoretical analyses of runtime complexity and learning dynamics establish that \textsf{RecursiveMAS} is more efficient than standard text-based MAS and maintains stable gradients during recursive training.
Empirically, we instantiate \textsf{RecursiveMAS} under 4 representative agent collaboration patterns and evaluate across 9 benchmarks spanning mathematics, science, medicine, search, and code generation.
In comparison with advanced single/multi-agent and recursive computation baselines, \textsf{RecursiveMAS} consistently delivers an average accuracy improvement of 8.3\%, together with 1.2$\times$--2.4$\times$ end-to-end inference speedup, and 34.6\%--75.6\% token usage reduction.

\end{abstract}

\begin{document}

\maketitle

\begin{figure}[!h]
    \vspace{-5pt}
    \centering
    \includegraphics[width=\linewidth]{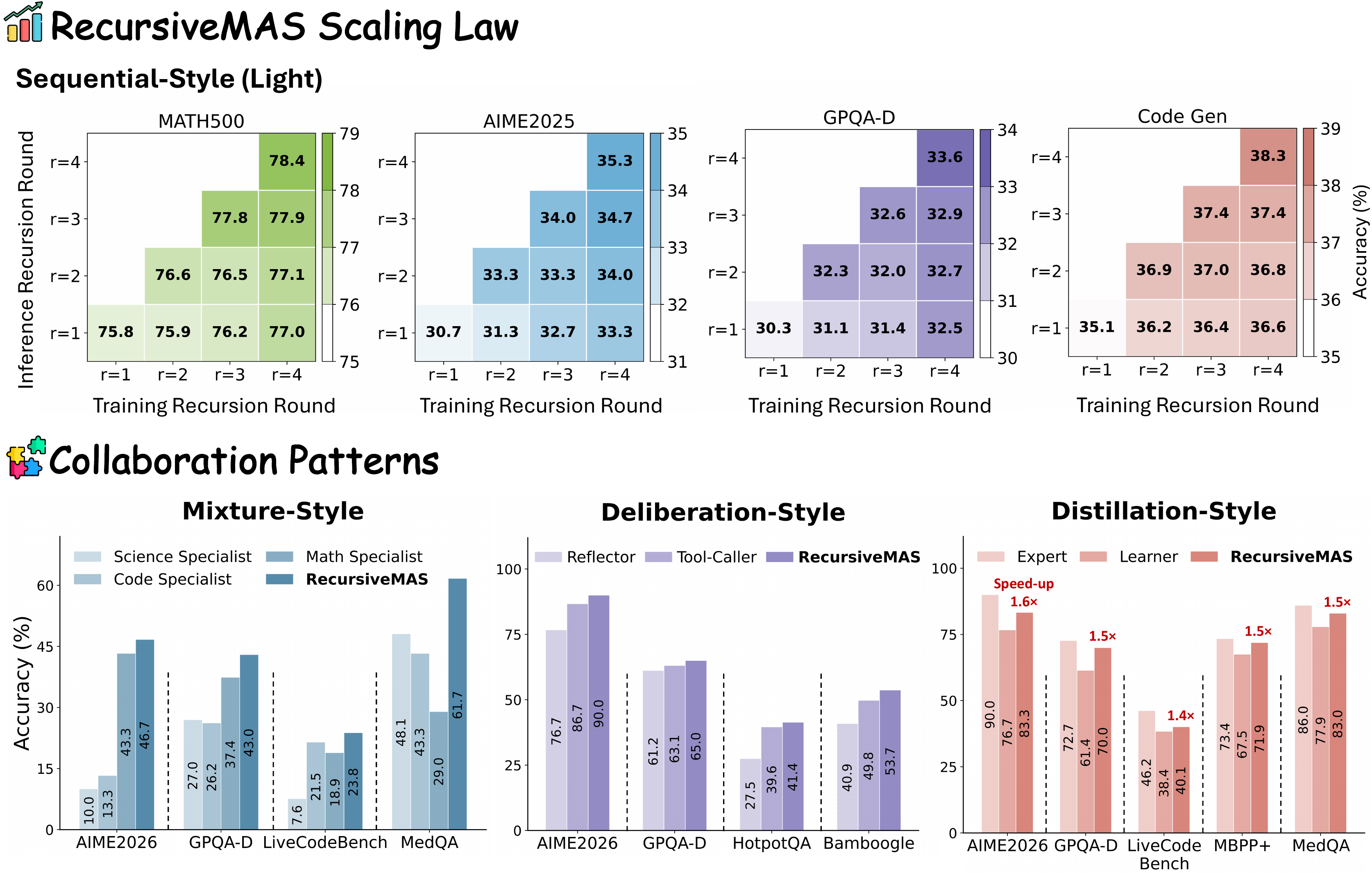}
    \vspace{-20pt}
    \caption{
    \textbf{Performance Landscape of \ours across Training/Inference Recursion Depths (Top):} The \textit{lightweight} \ours with sub-1.5B agents shows a clean scaling trend as recursion deepens.
    \textbf{Generalization across Common Collaboration Patterns (Bottom):} 
    The \textit{Scaled} \ours with stronger LLM agents (5-10B) seamlessly adapts to diverse multi-agent system structures.
    }
    \label{fig:overall}
    \vspace{-40pt}
\end{figure}

\addtocontents{toc}{\protect\setcounter{tocdepth}{-1}}
\newpage
\section{Introduction}

To tackle complex tasks, a single language model often falls short due to limited capacity, myopic generation, or inefficient exploration of the solution space \citep{shojaee2025illusion, song2026large, li2025system}. 
Once intelligence reaches a threshold, a natural direction is to treat individual models as specialized agents and organize them as a collaborative system \citep{tran2025multi, xu2025towards}. A multi-agent system (MAS) \citep{wang2025mixture, wu2024autogen} can scale performance by enabling individuals to work together and contribute complementary strengths.
Consider a set of heterogeneous agents, each assigned a distinct role and expertise. The system can either arrange agents into a sequential pipeline \citep{gu2025explain, qian2024chatdev} to progressively decompose and solve a problem, or engage and integrate multiple domain-specialized agents \citep{ye2025x, qian2025scaling, babu2025adaptive} for the task.

While MAS establishes a structural foundation, the next question is how to enable the system to evolve over time and adapt to different scenarios.
Prior work has explored prompt-based adaptation \citep{shen2025optimizing, zhou2025multi, zhang2025agentic}, where model interactions are improved through the iterative refinement of shared context. 
Although these updated prompts can help agents generate more aligned responses to the question, each agent itself cannot improve. 
A more principled line of work is to optimize agents through learning \citep{motwani2024malt, subramaniam2025multiagent, zhao2025sirius}. 
However, training entire agents inside the system is hard, as updating all model parameters is non-trivial \citep{hu2025owl}, and the sequential dependency in text-based interactions introduces substantial latency when agents must wait for others to complete generation.

Instead of improving each agent's capabilities as a standalone component, we adopt a higher-level learning perspective and aim to co-evolve and scale the entire system as an integrated whole. 
We recast agent collaboration through the lens of recursive language models (RLMs) \citep{zhang2025recursive, jolicoeur2025less,zhu2025scaling}, where a shared set of layers is iteratively applied and optimized within a continuous latent space. 
In this view, the entire multi-agent system can be treated as a recursive computation, where each agent acts like an RLM layer, iteratively passing latent representations to the next and forming a looped interaction process.

We call this new system-level agentic recursion framework \textbf{\ours}. Without updating all model parameters, agents are connected and iteratively optimized solely via the lightweight \textbf{\link}, a two-layer residual projection module for latent states transmission and refinement.
An \textit{inner \link} within each agent first consolidates the model's ongoing latent thoughts between input and output spaces during auto-regressive generation. 
An \textit{outer \link} then bridges hidden representations across heterogeneous agents built on different model types and sizes, enabling seamless cross-agent interaction. Together, all agents are chained in a unified loop to perform iterative latent collaboration, with only the last agent producing the textual output in the final recursion round.

Correspondingly, we pair \ours with an \textbf{Inner-Outer Loop} training paradigm for progressive co-optimization.
The \textit{inner loop} provides a preliminary model-level warm start for each agent, by training its inner \link to better align with latent thoughts generation.
The \textit{outer loop} then trains the outer \link across agents at the system-level, with gradients recursively back-propagated through the full computation traces over recursion rounds.
By exposing each agent to the feedback of itself and others from previous rounds, \ours learns to leverage \link for iterative refinement of collaboration, thus enabling the entire system to optimize in a unified manner.

To justify why recursion should occur in latent space rather than text-mediated interaction, we provide two theoretical analyses on runtime complexity and learning dynamics. From an architectural standpoint, RecursiveLink enables direct transformation of latent-space information, avoiding repeated decoding of intermediate agents with more efficient runtime complexity.
From the learning perspective, latent-space connections in \ours maintain stable gradient propagation flow across recursion rounds during training, avoiding the gradient vanishing induced by text-based interactions.

Empirically, we evaluate \ours on 9 benchmarks spanning mathematics, science, medicine, search, and code generation. We instantiate \ours with diverse model families, including Qwen3/3.5, LLama-3, Gemma3, and Mistral, and adapt our framework to 4 representative MAS collaboration scenarios: step-by-step sequential reasoning, mixture-of-experts collaboration, expert-to-learner knowledge distillation, and tool-integrated deliberation. 
As illustrated in Figure \ref{fig:overall}, compared with advanced recursive language models and MAS baselines, \ours achieves an average accuracy improvement of 8.3\%, while delivering 1.2$\times$--2.4$\times$ inference speedup and reducing token usage by 34.6\%--75.6\%. 
In addition, \ours is structure-agnostic and can generalize to various agent collaboration patterns with effective performance.
Our additional detailed analyses of scaling laws with deeper recursion, \link architectures, semantic distributions across recursions, and training cost further validate the efficiency and performance scalability of the \ours.

\vspace{-10pt}
\section{Preliminary}
\label{sec:preliminary}
\vspace{-5pt}

\textbf{Auto-regressive Generation in Latent Space.}
Let $f_\theta(\cdot)$ denote a standard Transformer model \citep{ashish2017attention} parameterized by $\theta$. Given a question $x$ with corresponding input embeddings $E=[e_1, \ldots, e_t]\in \mathbb{R}^{t \times d_h}$, the model computes the last-layer hidden state $h_t$ through the forward pass. 
In standard auto-regressive decoding, $h_t$ is projected to the vocabulary space to predict the next token. 
In contrast, latent generation keeps the recurrence entirely in continuous representation space by directly feeding the previously generated latent embedding $h_t$ back into the next forward pass. Formally, the next latent generation at step $t+1$ is:
\begin{equation}
h_{t+1} = f_{\theta}([E_{\le t};\, h_t]).
\end{equation}
We refer to the newly generated latent state $h_{t+1}$ as the model's ongoing \textit{latent thought}.

\textbf{Recursive Computation.} A recursive language model (RLM) increases reasoning depth by reusing the same transformation across recurrent steps. Consider a Transformer $f_\theta$ with $L$ layer blocks, denoted as $f_\theta = \mathcal{M}_L \circ \cdots \circ \mathcal{M}_1$. Instead of passing the input through the $L$-layer stack only once to obtain the last representation, a recursive model reuses the same stack for $n$ times of forward iterations, i.e.,
\begin{equation}
 H^{(0)}=E, \quad H^{(r)} = f_\theta\big(H^{(r-1)}\big), \quad r=1,\dots,n.
\end{equation}
The last round of latent representation $H^{(n)}$ is obtained through recursive refinement over the same shared Transformer layers, and is subsequently used for the final prediction.

\textbf{LLM-based Multi-Agent Evolution.} We define a multi-agent system $\mathcal{S}$ \citep{tran2025multi,zou2025latentmas} composed of $N$ agents denoted as $\mathcal{A} = \{A_1, \dots, A_N\}$, where each LLM agent $A_i$ corresponds to $f_{\theta_i}$ with its own last-layer representations $H_i$. We then denote the collective latent state of the system by $\mathcal{H} = \{H_1, \dots, H_N\}$. Given any input problem $x$ with the ground-truth $y$, the system $\mathcal{S}$ orchestrates interactions among agents to collaboratively produce a final prediction. With this setup in place, we now formalize the evolution of agents under recursive computation. 

\begin{definition}{Recursive Multi-Agent Evolution}
\textit{A \textbf{recursive evolution} is the progressive refinement of $\mathcal{H}$, where each agent adjusts its latent representation through iterative interaction with others and its own reasoning state, so that the updated system is better aligned for the given problem, i.e. 
$
\mathcal{S}^{(0)}
\xrightarrow[\text{Evolve}]{H^{(1)}}
\mathcal{S}^{(1)}
\xrightarrow[\text{Evolve}]{H^{(2)}}
\cdots
\xrightarrow[\text{Evolve}]{H^{(n)}}
\mathcal{S}^{(n)}
$.
}
\end{definition}

\textbf{Collaboration Pattern.} As MAS architectures are generally not fixed and can vary across tasks, we do not restrict the collaboration pattern to a single style. In this paper, we consider four commonly adopted collaboration patterns in multi-agent systems: (i) \textit{Sequential Style}, where we follow the chain-of-agents setting to assign three agents with complementary roles of \texttt{Planner}, \texttt{Critic}, and \texttt{Solver} and progressively decompose, judge, refine, and solve the problem; (ii) \textit{Mixture Style}, where a mixture of domain-specialized agents (\texttt{Math}, \texttt{Code}, \texttt{Science}) reasons over the input problem in parallel, and their outputs are aggregated by a \texttt{Summarizer} agent to form the final answer; (iii) \textit{Distillation Style}, where a larger, more capable \texttt{Expert} agent is paired with a smaller, faster \texttt{Learner} agent to distill expert knowledge while retaining higher generation efficiency; and (iv) \textit{Deliberation Style}, where an inner-thinking \texttt{Reflector} is paired with a \texttt{Tool-Caller} that can invoke external tools (e.g., Python or search APIs). The agents iteratively exchange, critique, and refine candidate solutions until reaching a shared consensus, after which the \texttt{Tool-Caller} produces the final answer.

\vspace{-10pt}
\section{Building a Recursive Multi-Agent System}
\vspace{-5pt}

\begin{figure}[!t]
    \centering
    \includegraphics[width=\linewidth]{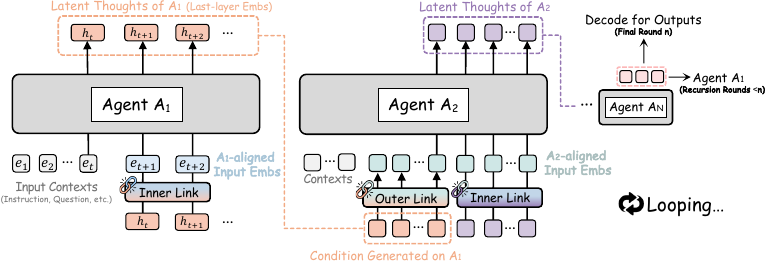}
    \caption{\textbf{Overall Architecture of \ours.} Each agent first leverages the inner \link to perform latent thoughts generation, and then transfers the generated information to the next agent through the outer \link. After the last agent finishes generation, its latent thoughts are fed back to the first agent, thereby forming a recursive loop within the multi-agent system.}
    \label{fig:recursivemas}
    \vspace{-10pt}
\end{figure}

We introduce \ours, an end-to-end recursive framework that links heterogeneous LLM agents together to scale the entire system through efficient and seamless latent collaboration. In the following, we will first elaborate the detailed architectural design of \ours, and then present the corresponding recursive learning algorithm. We also interleave theoretical analyses throughout the method pipeline to support underlying design principles.

\vspace{-10pt}
\subsection{A Lightweight \link}
\label{sec:recursive_link}

\begin{wrapfigure}{r}{0.4\textwidth}
    \vspace{-20pt}
    \centering
    \includegraphics[width=0.4\textwidth]{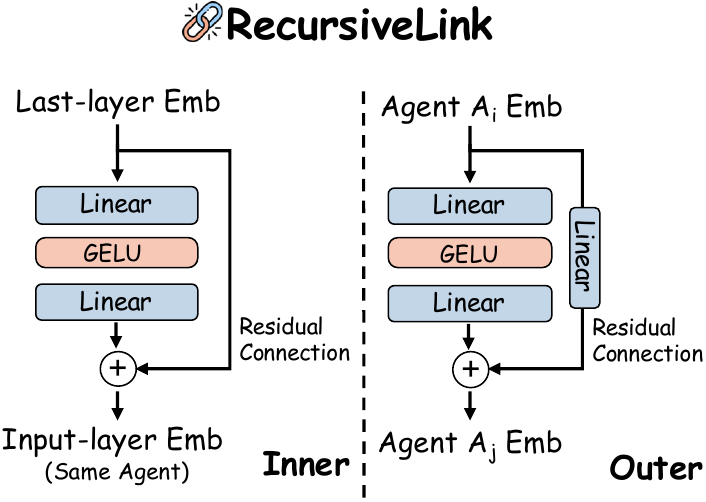}
    \caption{Illustration on the inner and outer \link Design.}
    \label{fig:recursivelink}
    \vspace{-15pt}
\end{wrapfigure}
A language model's last-layer hidden states provide a natural representation of its generated semantics.
The \link $\mathcal{R}$ is designed to preserve and transmit this information from one embedding space to another.
In \ours, the transition arises in two cases:
(i) \textit{Dense-to-Shallow Transition}, where the previous step's last-layer embeddings are fed back as the next-step input embeddings during latent thoughts generation;
and (ii) \textit{Cross-Model Transition}, where one model's newly generated latent representations are passed as conditioning inputs to another model.
As illustrated in Figure \ref{fig:recursivelink}, we bridge these two transitions through the inner and outer links.

\textbf{Inner Link.} Each LLM agent $A_i \in \mathcal{A}$ is paired with an inner \link $\mathcal{R}_\text{in}$ during auto-regressive generation. Given any new last-layer embedding vector $h$, $\mathcal{R}_\text{in}$ transforms it as:
\begin{equation}
\mathcal{R}_\text{in}(h) = h + W_2\,\sigma(W_1 h),
\end{equation}
where $W_1$ and $W_2$ are two standard linear layers, $\sigma(\cdot)$ is the GELU activation, and the residual connection preserves the original latent semantics. The transformed embedding is then used as input to the next forward pass of agent $A_i$. 

\textbf{Outer Link.} An outer \link $\mathcal{R}_{\mathrm{out}}$ connects heterogeneous agents with different hidden dimensions. To support this, an additional linear layer $W_3$ is introduced in the residual branch to map the source embedding from agent $A_i$ into the target embedding space of agent $A_j$, i.e.,
\begin{equation}
\mathcal{R}_{\mathrm{out}}(h)=W_3h+W_2\,\sigma(W_1h).
\end{equation}

\begin{highlight}{Why Residual Connection?}
The residual branch largely preserves the original semantics of the input, allowing the \link network to focus on \textit{aligning distributional differences} rather than learning the full projection from scratch. This leads to more stable and efficient training. We also explore other alternatives and empirically validate our proposed design in Section \ref{sec:experiments}.
\end{highlight}

\subsection{Chain All Agents Together as a Loop}

In recursive language models (RLMs), Transformer layers are connected through hidden states, and the residual stream loops across these layers to increase reasoning depth.
Under this view, we cast each agent in \ours as an RLM layer, with information flowing and recurring within and across agents as the hidden stream of the system. As shown in Figure \ref{fig:recursivemas}, each agent contributes by reasoning and interacting with others in the latent space, together forming a recursive loop.

\textbf{Latent Thoughts Generation inside Agents.} We start by describing how each agent unfolds reasoning through the auto-regressive generation of latent thoughts.
Specifically, given input contexts' embeddings $E_{A_1} = [e_1, e_2, \dots, e_t]$ for the question and the agent-specific instructions, the first agent $A_1$ passes $E_{A_1}$ through the Transformer and computes the last-layer hidden representation $h_t$ at step $t$. 
Then, we insert $h_t$ into the inner link $\mathcal{R}_{\mathrm{in}}$ to map the distribution back into the input embedding space for the next step, yielding $e_{t+1} = \mathcal{R}_{\mathrm{in}}(h_t)$. Agent $A_1$ repeats this process auto-regressively for $m$ forward steps, generating a new continuous sequence of latent thoughts $H_{A_1} = [h_{t}, h_{t+1}, \dots, h_{t+m}]$.

\textbf{Interaction across Heterogeneous Agents.} Once agent $A_1$ completes latent reasoning, its latent thoughts $H_{A_1}$ are sent to the next agent $A_2$ for cross-agent interaction. To achieve seamless information transmission across different types of agents, we first pass $H_{A_1}$ through the outer link $\mathcal{R}_\text{out}$ to transform it into input embeddings aligned with agent $A_2$. Next, agent $A_2$ starts latent thoughts generation conditioned on both its own input contexts and transferred information from $A_1$ (i.e., $E_{A_2} \oplus \mathcal{R}_\text{out}(H_{A_1})$).

We continue this interaction process across all consecutive agents in \ours. In particular, after the last agent $A_N$ completes latent thoughts generation, its latent outputs (representing the system's latent answer to the input question) are passed back to the first agent $A_1$ through the inner-outer \link, thereby closing the recursive loop. This recurrent connection allows each new recursion round to condition on information produced in previous rounds, so that each agent can iteratively reflect on earlier system outputs and refine their current generation.
Throughout intermediate recursion rounds, all agents collaborate entirely in the latent space. Only after the final recursion round, the agent $A_N$ decodes the textual output as the system's final answer to the question.

\textbf{End-to-End Complexity Analyses.}
To characterize the architectural efficiency of the full \ours pipeline, we next analyze its end-to-end runtime complexity with \link integrated throughout the system. The following proposition compares \ours with a text-based recursive MAS, in which agents follow the same multi-round recursive collaboration structure but communicate through an explicit text medium rather than \link-enabled latent interaction.

\begin{proposition}[\textbf{\ours Runtime Complexity}]
\label{prp:eff}
Without \link, a text-based Recursive MAS with the same collaboration structure requires runtime complexity of $\Theta(N(m|V|d_h+(t+m)d_h^2+(t+m)^2d_h))$; In contrast, with \link-enabled collaboration, \ours{} achieves an end-to-end runtime complexity of $\Theta(N(md_h^2+(t+m)d_h^2+(t+m)^2d_h))$. 
\end{proposition}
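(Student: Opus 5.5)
The plan is a per-agent cost count, summed over the $N$ agents of one recursion round. The number of rounds $n$ is treated as a constant, or equivalently the statement is read per round; it multiplies both bounds identically. Fix one agent with a context of $t$ tokens that generates $m$ new positions. Its cost splits into three parts: the Transformer forward computation, the per-step interface (the map from last-layer state to next input), and any cross-agent handoff. Throughout, the number of layers is treated as a constant and KV caching is assumed, which is the standard setting in which the stated bounds make sense.

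\textbf{Common Transformer term.} First I would show that prefilling the $t$ context tokens and then decoding $m$ further positions costs $\Theta((t+m)d_h^2+(t+m)^2d_h)$ in either system. Every one of the $t+m$ positions passes through the QKV/output projections and the FFN, giving $\Theta(d_h^2)$ per position. Attention at position $j$ costs $\Theta(j\,d_h)$, and summing over $j\le t+m$ gives $\Theta((t+m)^2 d_h)$. In the text-based system the positions are token embeddings; in \ours they are $\mathcal{R}_{\mathrm{in}}$ outputs or $\mathcal{R}_{\mathrm{out}}$-transferred states. The count is identical in both cases. For the handoff, the incoming message of length $O(m)$ is part of the next agent's context, so it is already absorbed into the $t+m$ accounting.

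\textbf{Interface term.} This step separates the two systems. In the text-based MAS, each of the $m$ generated positions must be decoded: the LM head $W_{\text{vocab}}h$ costs $\Theta(|V|d_h)$, followed by softmax/sampling in $\Theta(|V|)$ and an embedding lookup in $O(d_h)$. Over $m$ steps this totals $\Theta(m|V|d_h)$. In \ours, each step instead applies $\mathcal{R}_{\mathrm{in}}(h)=h+W_2\sigma(W_1h)$. With $W_1,W_2$ of size $O(d_h)\times O(d_h)$ this costs $\Theta(d_h^2)$, so $\Theta(md_h^2)$ in total. The outer link $\mathcal{R}_{\mathrm{out}}$ applied to the $m$ latent thoughts also costs $\Theta(md_h^2)$, assuming hidden sizes of different agents are within constant factors of $d_h$. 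Adding these terms and summing over $N$ agents yields the two stated expressions.

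\textbf{Main obstacle.} The hardest part is justifying the $\Theta$ rather than just the $O$, and the modelling assumptions that make the comparison meaningful. For the lower bounds, I would argue that each term is forced: every position must be embedded and attended over, and in the text system every decoded token requires computing the full vocabulary logits. I would state explicitly that link widths are $\Theta(d_h)$, that the layer count is constant, and that the text messages have length $\Theta(m)$ so that both systems exchange equal-length messages. The final decoding of the answer by $A_N$ happens once in either system and is of lower order, so I would note it and absorb it. A closing remark would point out that since $|V|\gg d_h$ typically, the first term dominates the gap.
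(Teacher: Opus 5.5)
Your proposal follows the paper's proof essentially step for step. Per agent, you split the cost into the shared Transformer term $\Theta((t+m)d_h^2+(t+m)^2d_h)$ plus an interface term, which is $\Theta(m|V|d_h)$ for vocabulary decoding versus $\Theta(md_h^2)$ for the link, and then multiply by $N$. Your explicit assumptions (KV caching, constant depth, link widths $\Theta(d_h)$, fixed $n$) only make precise what the paper leaves implicit.

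One caution concerns the final text decoding by $A_N$. It is not automatically lower order: an answer of $\Theta(m)$ tokens already costs $\Theta(m|V|d_h)$, which need not be dominated by the \textsf{RecursiveMAS} bound when $|V|\gg d_h$. You should therefore exclude it by an explicit assumption, as the paper tacitly does, rather than claim it is absorbed.
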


\begin{remark}
Since \(d_h \ll |V|\) in practice, \ours replaces the expensive per-step vocabulary-space decoding cost $m|V|d_h$ with a much more efficient latent-space transformation $md_h^2$.
\end{remark}

Proposition~\ref{prp:eff} shows the end-to-end runtime advantage of \ours. The full proof is provided in Appendix~\ref{app:proposition_proof}. We also empirically analyze the efficiency advantage of our method in Section~\ref{sec:experiments}.

\vspace{-10pt}
\section{Learning to Recur as a Whole}

\begin{figure}[!t]
    \centering
    \includegraphics[width=\linewidth]{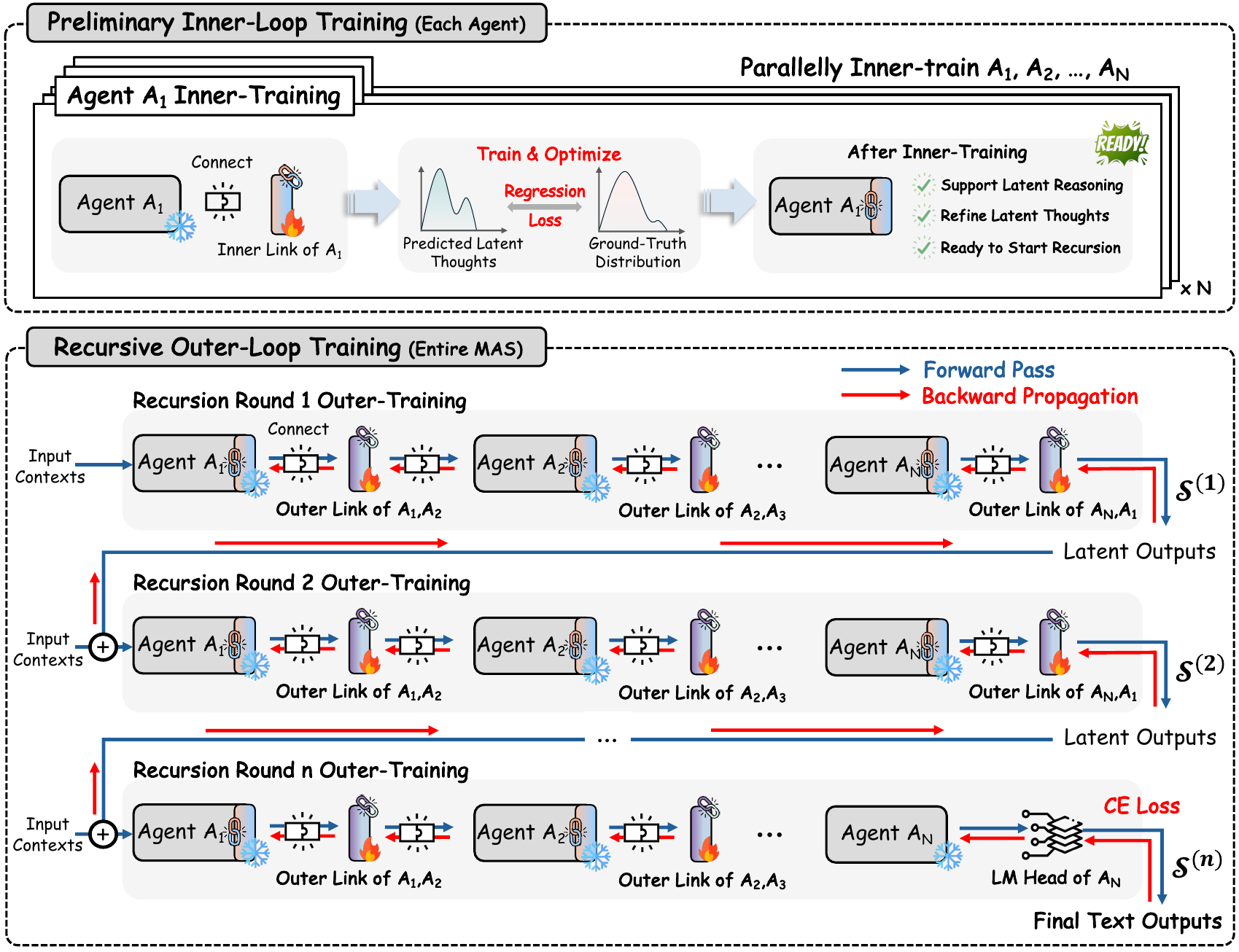}
    \caption{\textbf{Two-Stage Training Pipeline of \ours.} We first perform inner-loop training for each agent in parallel to warm up the inner \link for latent thoughts generation, and then conduct outer-loop training to recursively optimize the outer \link over the entire system.}
    \label{fig:recursivetraining}
    \vspace{-10pt}
\end{figure}

With the framework in place, we next present the recursive learning algorithm, which only needs to train on the \link to enable co-optimization of the entire system loop. As illustrated in Figure \ref{fig:recursivetraining}, the learning procedure consists of two stages: (i) a preliminary \textit{inner-loop} to equip each agent with stronger latent thoughts generation capabilities; and (ii) an iterative \textit{outer-loop} to progressively optimize the system as one unified entity over recursion rounds.

\textbf{Model-Level Inner-Loop Training.} For practical deployment of \ours, we directly adopt off-the-shelf text-generation models as agents. To adapt these agents to the latent thoughts generation pattern, we first warm-start them through the inner \link $\mathcal{R}_{\mathrm{in}}$.
Specifically, given each agent $A_i \in \mathcal{A}$ with parameters $\theta_i$ and the training example $(x, y) \in \mathcal{D}_{\text{train}}$, we construct the target latent thoughts distribution by passing the ground-truth text $y$ through the standard input embedding layer $\mathrm{Emb}_{\theta_i}$ of agent $A_i$. The objective of training the inner link $\mathcal{R}_{\mathrm{in}}$ corresponding to $A_i$ then formulates as:
\begin{equation}
\label{eq:inner_train}
    \mathcal{L}_\mathrm{in} = 1 - \cos\big(\mathcal{R}_{\mathrm{in}}(H), \mathrm{Emb}_{\theta_i}(y)\big),
\end{equation}
where $H$ denotes the last-layer latent thoughts generated by agent $A_i$, and $\cos(\cdot,\cdot)$ denotes the standard cosine similarity.
The regression objective here encourages each agent to leverage its inner link $\mathcal{R}_{\mathrm{in}}$ to align latent thoughts with the semantic distribution from the input embedding layer, while eliminating the process of explicit decoding and re-encoding.

\textbf{System-Level Outer-Loop Training.} Next, we iteratively co-optimize the entire system through the outer \link $\mathcal{R}_{\mathrm{out}}$. Let $\mathcal{S}^{(r)}$ denote the system state at recursion round $r = 1, \dots, n$. During outer-loop training, the system is first unrolled along its looped structure for $n$ forward rounds. 
After the final textual prediction is produced in the last recursion round, we jointly optimize all outer links that connect the system with the following cross-entropy (CE) objective:
\begin{equation}
\mathcal{L}_{\mathrm{out}} = \mathrm{CE}\!\left(
\mathcal{S}^{(n)}\!\bigl(\mathcal{S}^{(n-1)}(\cdots \mathcal{S}^{(1)}(x))\bigr),\, y
\right).
\end{equation}
Throughout training, the computation graph is preserved along the full recursive paths. Gradient backpropagation assigns each outer link a shared credit signal according to its global contribution to the final prediction, thereby enabling information flow to be iteratively optimized as a whole.

\textbf{Learning Advantage of \ours.}
To better understand why latent collaboration of agents in the inner-outer loop training confers a stronger learning advantage, we provide a detailed theoretical analysis below of the gradient propagation process throughout recursive training of \ours.

\begin{theorem}[\textbf{Gradient Stability}]
\label{thm:grad}

Under the Realistic Assumptions (stated in Appendix \ref{app:assumptions}), if tokens are confident with entropy $\le \epsilon$, where typically $\epsilon \ll 1$: directly applying text-based SFT (denoted by $\mathcal R_\text{text}(h)$) during recursion suffers from gradient vanishing (i.e., gradient norm close to 0); while \ours with the \link $\mathcal{R}$ maintains stable and near constant gradients (i.e., gradient norm close to 1) during looped backpropagation process. Formally, with probability $\ge 1-\delta$,

\begin{align}
\left\|\frac{\partial\mathcal R_\text{text}(h)}{\partial h}\right\|_2\le O(\epsilon)\ll1,\qquad\left\|\frac{\partial\mathcal R(h)}{\partial h}\right\|_2\ge\Omega\left(1-\sqrt{\frac1{d_h}\log\frac1{\delta}}\right).%
\end{align}
\end{theorem}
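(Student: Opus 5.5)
The theorem has two separate bounds, and I will prove them separately, making the "Realistic Assumptions" concrete as I go. I model the text-based transition as $\mathcal R_\text{text}(h)=E^\top \mathrm{softmax}(W_U h)$, where $W_U\in\mathbb R^{|V|\times d_h}$ is the unembedding and $E\in\mathbb R^{|V|\times d_h}$ is the input embedding table. This is the soft, differentiable relaxation of decode-then-re-embed that text-based SFT effectively backpropagates through. The outer link has $W_3\ne I$, so I treat the inner link $\mathcal R(h)=h+W_2\sigma(W_1h)$ and indicate the modification for $W_3$ at the end.

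\emph{Text branch.} The chain rule gives
\begin{equation*}
\frac{\partial\mathcal R_\text{text}}{\partial h}=E^\top J(p)\,W_U,\qquad J(p)=\mathrm{diag}(p)-pp^\top .
\end{equation*}
Let $i^\star$ be the argmax token and write $1-p_{i^\star}=\eta$. I bound $\|J(p)\|_2\le\|J(p)\|_{\infty\to\infty}=\max_i 2p_i(1-p_i)\le 2\eta$. It remains to relate $\eta$ to the entropy. Since $H(p)\ge -p_{i^\star}\log p_{i^\star} - \sum_{i\neq i^\star}p_i\log p_i \ge \eta\log\frac{1}{\eta}$ roughly, and more cleanly $H(p)\ge h_b(\eta)\ge \eta\log 2$ for $\eta\le 1/2$ using the binary entropy lower bound, entropy $\le\epsilon$ forces $\eta\le\epsilon/\log 2=O(\epsilon)$. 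The assumptions of bounded operator norms $\|E\|_2,\|W_U\|_2\le C$ then give $\|\partial\mathcal R_\text{text}/\partial h\|_2\le 2C^2\epsilon/\log2=O(\epsilon)$. This part is deterministic.

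\emph{Link branch.} The Jacobian is $I+W_2\,\mathrm{diag}(\sigma'(W_1h))\,W_1=:I+M$. Taking any fixed unit vector $u$ (for instance $u=h/\|h\|$), $\|I+M\|_2\ge \|u+Mu\|_2\ge 1+u^\top M u$, so it suffices to show $u^\top Mu\ge -O(\sqrt{\log(1/\delta)/d_h})$ with probability $\ge1-\delta$. I will use the assumption that $W_1,W_2$ are (near) initialization-scale random Gaussian matrices with entries of variance $\Theta(1/d_h)$, or trained but with this random-like spectral behavior, and that GELU has bounded derivative $|\sigma'|\le c$. Conditioning on $W_1$, the quantity $a:=\mathrm{diag}(\sigma'(W_1h))W_1u$ is fixed, and $u^\top W_2 a$ is Gaussian with mean $0$ and variance $\|a\|^2/d_h$. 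We have $\|a\|\le c\|W_1u\|=O(1)$ with high probability by standard Gaussian norm concentration. A Gaussian tail bound together with a union bound then gives $|u^\top Mu|\le O(\sqrt{\log(1/\delta)/d_h})$ with probability $\ge1-\delta$, which yields $\|\partial\mathcal R/\partial h\|_2\ge 1-O(\sqrt{\log(1/\delta)/d_h})$.

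For the outer link, I replace $I$ by $W_3$. Assuming $W_3$ is near-isometric (e.g.\ $\sigma_{\min}(W_3)\ge 1-o(1)$, or a random rectangular Gaussian whose singular values concentrate at $1\pm O(\sqrt{d_\text{in}/d_\text{out}})$), the same perturbation argument applies with the test vector $u$ and $W_3u$.

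The main obstacle is the link branch. The lower bound genuinely requires a distributional assumption on $W_1,W_2$ that prevents the residual from cancelling the identity, since in the worst case $M$ could equal $-I$. I must therefore state the Realistic Assumptions so as to include random-like weights with $\Theta(1/d_h)$ variance (or a spectral-norm bound $\|M\|_2<1$ in expectation) and $h$ independent of $W_2$. The text branch only requires the entropy-to-$\eta$ conversion and bounded embedding norms.
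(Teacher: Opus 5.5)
Your proof is correct, and on both halves it takes a different route from the paper's.

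\textbf{Text branch.} The paper uses positive semidefiniteness of $S=\operatorname{diag}(p)-pp^\top$ to get $\|S\|_2\le\operatorname{Tr}(S)=1-\|p\|_2^2$. It then gets $1-\|p\|_2^2\le H(p)$ from $\ln z\le z-1$, which needs no case split and gives constant $1$. Your row-sum bound $\|S\|_2\le 2\eta$ combined with $H(p)\ge h_b(\eta)$ also works. However, you use $\eta\le 1/2$ without justifying it. Close this by noting $H(p)\ge-\log p_{i^\star}$, so $\epsilon<\log 2$ forces $p_{i^\star}>1/2$. For $\epsilon\ge\log 2$ the claim is trivial, since $\|S\|_2\le 1/2$ always.

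\textbf{Link branch.} The paper applies the operator-norm triangle inequality $\bigl|\|J\|_2-1\bigr|\le\|W_2\Sigma'W_1\|_2$ together with a concentration bound of the form $O\bigl(\sqrt{\log(1/\delta)/d_h}+1\bigr)$. Taken at face value, this only yields $\|J\|_2\ge 1-O(1)$. Under Kaiming initialization the residual branch's operator norm genuinely is of order one; already $\|W_2\Sigma'W_1u\|_2$ is of order one for a typical unit $u$. So that route cannot certify a lower bound near $1$.

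Your test-vector argument, $\|J\|_2\ge 1+u^\top W_2\Sigma'W_1u$, is what actually delivers $1-O\bigl(\sqrt{\log(1/\delta)/d_h}\bigr)$. Conditionally on $W_1$ and $h$, the bilinear form is Gaussian with variance $O(\|a\|_2^2/d_h)$. A single fixed direction therefore fluctuates only at scale $d_h^{-1/2}$, even though the full perturbation is of order one. The price is requiring $h$, and hence $u$ and $\Sigma'$ given $W_1$, to be independent of $W_2$. The paper uses this implicitly by treating $h$ as fixed and taking probability over the initialization.

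Your extension to general $W_3$ is not needed, since the Assumptions restrict to $W_3=I$.
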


The full proof is provided in Appendix \ref{app:proof-grad}. Theorem \ref{thm:grad} demonstrates the learning advantage of \ours, by allowing gradients to remain informative across recursion rounds.
Together, theoretical justifications in Proposition \ref{prp:eff} and Theorem \ref{thm:grad} motivate our design of latent-based interaction among agents rather than text mediation, as it makes the whole-system co-optimization of \ours easier and more effective.
During inference, \ours performs recursive generation by following the same $n$ recursion rounds as in the outer-loop training.

\section{Empirical Evaluations}
\label{sec:experiments}

\textbf{Tasks and Datasets.} 
We conduct comprehensive evaluations of \ours on nine benchmarks across various domains: (i) \textit{Mathematical Reasoning}, including MATH500 \citep{math500}, AIME2025 \citep{aime25}, and AIME2026 \citep{aime26}; (ii) \textit{Scientific and Medical Tasks}, including GPQA-Diamond \citep{gpqa} and MedQA \citep{medqa}; (iii) \textit{Code Generation}, including LiveCodeBench-v6 \citep{lcb} and MBPP Plus \citep{codeplus}; and (iv) \textit{Search QA}, including HotpotQA \citep{yang2018hotpotqa} and Bamboogle \citep{Bamboogle}.
We adopt the standard evaluation metric for each dataset. For AIME2025/2026, we report Pass@10 accuracy for testing robustness. Additional benchmark and metrics details are in Appendix~\ref{app:datasets}.

\begin{table}[!t]
\centering
\small
\caption{\textbf{Agent configurations for different collaboration patterns in \ours.}
We select off-the-shelf models from diverse model families to form heterogeneous agent compositions with complementary strengths. Each assignment is chosen to match the role-specific needs of the corresponding collaboration pattern while preserving both practical efficiency and scalability.
}
\label{tab:mas_config}
\resizebox{\textwidth}{!}{
\begin{tabular}{l|l|l}
\toprule
\textbf{Collaboration Pattern} & \textbf{Role} & \textbf{Model Size \& Version} \\
\midrule

\multirow{3}{*}{Sequential Style (Light)}
& Planner & Qwen3-1.7B \citep{yang2025qwen3} \\
& Critic & Llama3.2-1B-Instruct \citep{grattafiori2024llama} \\
& Solver & Qwen2.5-Math-1.5B-Instruct \citep{qwen2.5} \\
\midrule

\multirow{3}{*}{Sequential Style (Scaled)}
& Planner & Gemma3-4B-it \citep{gemma3}\\
& Critic & Llama3.2-3B-Instruct \citep{grattafiori2024llama} \\
& Solver & Qwen3.5-4B \citep{yang2025qwen3} \\
\midrule

\multirow{4}{*}{Mixture Style}
& Code Specialist & Qwen2.5-Coder-3B-Instruct \citep{hui2024qwen2} \\
& Science Specialist & BioMistral-7B \citep{labrak2024biomistral} \\
& Math Specialist & DeepSeek-R1-Distill-Qwen-1.5B \citep{qwen2.5} \\
& Summarizer & Qwen3.5-2B \citep{yang2025qwen3} \\
\midrule

\multirow{2}{*}{Distillation Style}
& Learner & Qwen3.5-4B \citep{yang2025qwen3}\\
& Expert & Qwen3.5-9B \citep{yang2025qwen3} \\
\midrule
\multirow{2}{*}{Deliberation Style}
& Reflector & Qwen3.5-4B \citep{yang2025qwen3} \\
& Tool-Caller & Qwen3.5-4B (with Tool-Integration) \citep{yang2025qwen3} \\

\bottomrule
\end{tabular}
}
\end{table}

\textbf{Models and Baselines.}
We instantiate \ours\ with diverse agent collaboration patterns, including (i) \textit{Sequential Style}, (ii) \textit{Mixture Style}, (iii) \textit{Distillation Style}, and (iv) \textit{Deliberation Style}, following the setups described in Section~\ref{sec:preliminary}. For each collaboration style, we use off-the-shelf LLMs from diverse model families, covering Qwen \citep{qwen2.5,yang2025qwen3}, Llama \citep{grattafiori2024llama}, Gemma \citep{gemma3}, and Mistral \citep{jiang2024mixtral}, to construct heterogeneous agent compositions. Detailed model configurations and their assigned roles are provided in Table~\ref{tab:mas_config}.

For baseline comparisons, we evaluate \ours against (i) \textit{Single Advanced Agents}, where individual LLM agents from each collaboration pattern are isolated as standalone models to solve problems, such as the final agent in Sequential Style and each domain specialist in Mixture Style. For fair comparison, we provide full supervised and LoRA fine-tuning \citep{schulman2025lora} for single models on the same training set.
(ii) \textit{Recursion-based Methods}, including single recursive language models, LoopLM \citep{zhu2025scaling}, and Recursive-TextMAS, where agents collaborate in the same way as \ours but interact through text instead of latent thoughts;
and (iii) additional \textit{Representative Multi-Agent Frameworks}, including TextGrad \citep{yuksekgonul2025optimizing} and Mixture-of-Agents (MoA) \citep{wang2025mixture} for more holistic structure-wide evaluations. Detailed baseline implementations are provided in Appendix~\ref{app:baseline}.

\textbf{Training and Implementation Details.} For inner-outer loop training, we freeze all LLM agent parameters and update only the inner/outer \link. We curate a diverse training set spanning multiple domains, sourced from s1K \citep{s1} for mathematical problem solving, m1k \citep{m1} for medical and scientific tasks, OpenCodeReasoning \citep{opencode} for code generation, and ARPO-SFT \citep{dong2025agentic} for agentic tool-augmentation (Python Code/Search-API) settings.
We use AdamW with a learning rate of 5e-4, a cosine learning rate scheduler, and a batch size of 4. During inference, we set top-p to 0.95 and use a temperature of 0.6 for most reasoning tasks and $0.2$ for code generation, as suggested in each model's official report. The maximum output length is adjusted for each task based on its relative difficulty. We perform hyperparameter tuning and report the mean performance over five independent runs. More training/inference details and hyperparameter setups are provided in Appendix~\ref{app:train}.

\begin{table}[!t]
\centering
\small
\caption{\textbf{Main results of \ours over Different Recursion Rounds.}
We report the accuracy (\%, ``Acc.''), end-to-end runtime (s, ``Time''), and overall token usage (``Token'') across domains. 
For Code Gen., we evaluate the Light and Scaled settings on MBPP+ and LiveCodeBench, respectively.
The average standard deviation of \ours across 5 runs is $\pm0.0041$ for accuracy, $\pm26$ for runtime, and $\pm33$ for tokens.
We compare with all methods under the same MAS framework structure and recursion budgets.
The performance and efficiency advantages of \ours become increasingly significant as the recursion round $r$ increases, with improvements highlighted.
}
\label{tab:seq_mas}
\vspace{-5pt}
\resizebox{\textwidth}{!}{
\begin{tabular}{l|c|cccccc|cccc|cc|c}
\toprule

\multirow{2}{*}{\makecell[l]{\textbf{Method} }}
& \multirow{2}{*}{\textbf{Metric}} 
& \multicolumn{2}{c}{\textbf{Math500}} 
& \multicolumn{2}{c}{\textbf{AIME2025}} 
& \multicolumn{2}{c|}{\textbf{AIME2026}} 
& \multicolumn{2}{c}{\textbf{GPQA-D}} 
& \multicolumn{2}{c|}{\textbf{MedQA}}
& \multicolumn{2}{c|}{\textbf{Code Gen.}} 
& \multirow{2}{*}{\textbf{Improve}} \\

\cmidrule(lr){3-4} \cmidrule(lr){5-6} \cmidrule(lr){7-8} \cmidrule(lr){9-10} \cmidrule(lr){11-12} \cmidrule(lr){13-14}
& 
& \text{Light} & \text{Scaled}
& \text{Light} & \text{Scaled}
& \text{Light} & \text{Scaled}
& \text{Light} & \text{Scaled}
& \text{Light} & \text{Scaled}
& \text{Light} & \text{Scaled}
& \\

\specialrule{\lightrulewidth}{0pt}{0pt}
\multicolumn{15}{>{\columncolor{gray!30}}c}{\textit{\textbf{Recursive Round r=1}}} \\
\specialrule{\lightrulewidth}{0pt}{0pt}

\multirow{3}{*}{\text{Recursive-TextMAS}} 
& Acc.   & 71.9 & 84.2 & 24.0 & 71.3 & 16.7 & 76.7 & 28.1 & 61.5 & 29.0 & 76.1 & 30.7 & 38.5 & \cellcolor{gray!0}\textcolor{gray}{Base} \\
& Time   & 1368 & 2401 & 2380 & 8462 & 2216 & 9376 & 1056 & 2190 & 1555 & 1522 & 976  & 8867 & \cellcolor{gray!0}\textcolor{gray}{Base} \\
& Token  & 1185 & 1471 & 2993 & 9397 & 2754 & 8854 & 2084 & 3693 & 2382 & 1427 & 1146 & 3154 & \cellcolor{gray!0}\textcolor{gray}{Base} \\
[-0.4ex]\midrule\addlinespace[-0.000ex]

\rowcolor{gray!10}
& Acc.   & 75.8 & 86.3 & 30.7 & 80.0 & 17.3 & 82.7 & 30.3 & 63.1 & 30.3 & 78.2 & 35.1 & 40.1 & \gaincell{$\uparrow$ \textbf{3.4}} \\
\rowcolor{gray!10}
& Time   & 825  & 1701 & 1829 & 7784 & 1788 & 8134 & 586  & 1965 & 1194 & 1348 & 449  & 7908 & \gaincell{$\times$\textbf{1.2}} \\
\rowcolor{gray!10}
\multirow{-3}{*}{\textbf{\ours}} & Token  & 523  & 816  & 1622 & 6338 & 1576 & 7021 & 829  & 2675 & 1369 & 964  & 577  & 2198 & \gaincell{$\downarrow$ \textbf{34.6\%}} \\

\specialrule{\lightrulewidth}{0pt}{0pt}
\multicolumn{15}{>{\columncolor{gray!30}}c}{\textit{\textbf{Recursive Round r=2}}} \\
\specialrule{\lightrulewidth}{0pt}{0pt}

\multirow{3}{*}{\text{Recursive-TextMAS}} 
& Acc.   & 72.5 & 84.4 & 23.3 & 70.7 & 10.0 & 77.3 & 28.7 & 59.1 & 28.3 & 76.1 & 30.0 & 38.0 & \cellcolor{gray!0}\textcolor{gray}{Base}  \\
& Time   & 2204 & 3958 & 4247 & 14380 & 3960 & 14110 & 1825 & 4207 & 3097 & 2745 & 1847 & 14792 & \cellcolor{gray!0}\textcolor{gray}{Base}  \\
& Token  & 2117 & 2794 & 5318 & 16372 & 4982 & 16213 & 3708 & 6128 & 4436 & 2609 & 1998 & 5369 & \cellcolor{gray!0}\textcolor{gray}{Base}  \\
[-0.4ex]\midrule\addlinespace[-0.000ex]

\rowcolor{gray!10}
& Acc.   & 76.6 & 87.1 & 33.3 & 86.0 & 18.7 & 84.0 & 32.3 & 64.6 & 31.2 & 78.3 & 36.9 & 41.3 & \gaincell{$\uparrow$ \textbf{6.0}} \\
\rowcolor{gray!10}
& Time   & 1096 & 1974 & 2367 & 8178 & 2263 & 8965 & 752  & 2342 & 1427 & 1664 & 627  & 8329 & \gaincell{$\times$\textbf{1.9}} \\

\rowcolor{gray!10}
\multirow{-3}{*}{\textbf{\ours}} & Token  & 495  & 953  & 1614 & 5314 & 1552 & 6657 & 813  & 2521 & 1383 & 1008 & 531  & 2020 & \gaincell{$\downarrow$ \textbf{65.5\%}} \\

\specialrule{\lightrulewidth}{0pt}{0pt}
\multicolumn{15}{>{\columncolor{gray!30}}c}{\textit{\textbf{Recursive Round r=3}}} \\
\specialrule{\lightrulewidth}{0pt}{0pt}

\multirow{3}{*}{\text{Recursive-TextMAS}} 
& Acc.   & 69.1 & 85.8 & 18.0 & 73.3 & 16.7 & 74.7 & 28.7 & 58.6 & 28.5 & 77.1 & 29.3 & 36.5 & \cellcolor{gray!0}\textcolor{gray}{Base}  \\
& Time   & 2952 & 6010 & 6183 & 19304 & 5907 & 19678 & 3322 & 7537 & 4684 & 3922 & 2310 & 22036 & \cellcolor{gray!0}\textcolor{gray}{Base}  \\
& Token  & 3059 & 4100 & 8645 & 23651 & 7813 & 22915 & 5820 & 8091 & 6307 & 3731 & 2676 & 7078 & \cellcolor{gray!0}\textcolor{gray}{Base}  \\
[-0.4ex]\midrule\addlinespace[-0.000ex]

\rowcolor{gray!10}
& Acc.   & 77.8 & 88.2 & 34.0 & 86.7 & 20.0 & 86.0 & 32.6 & 66.2 & 31.7 & 79.3 & 37.4 & 42.8 & \gaincell{$\uparrow$ \textbf{7.2}} \\
\rowcolor{gray!10} 
& Time   & 1360 & 2320 & 2727 & 8981 & 2629 & 9623 & 861  & 2638 & 1704 & 1912 & 805  & 10186 & \gaincell{$\times$\textbf{2.4}} \\
\rowcolor{gray!10}
\multirow{-3}{*}{\textbf{\ours}} & Token  & 519  & 893  & 1586 & 5342 & 1537 & 6860 & 786  & 2524 & 1378 & 1056 & 595  & 2247 & \gaincell{$\downarrow$ \textbf{75.6\%}} \\
[-0.4ex]\bottomrule
\end{tabular}
}
\vspace{-5pt}
\end{table}

\vspace{-10pt}
\subsection{Scaling Performance via Recursion}

We begin by evaluating how \ours performs across different recursion depths $r=1,2,3$. As shown in Table \ref{tab:seq_mas}, we analyze agent collaboration behavior from three complementary perspectives: (i) accuracy, (ii) end-to-end runtime, and (iii) overall system token throughput. We also include a text-based recursive baseline for reference. 
Across seven math, science, and code generation tasks, both light and scaled versions of \ours exhibit a consistent upward trend as recursion depth increases. When compared with the text-based recursion, \ours consistently improves over the baseline by an average of 8.1\% at $r=1$, 19.6\% at $r=2$, and 20.2\% at $r=3$, with performance advantage more pronounced as the recursion deepens. Additionally, under identical MAS architectures, \ours delivers steadily increasing efficiency gains across recursion rounds, accelerating end-to-end inference time from \(1.2\times\) to \(2.4\times\) while reducing output tokens from \(34.6\%\) to \(75.6\%\). Additional case studies on the running pipeline of \ours across domains are provided in Appendix~\ref{app:case_study_question}.

\textbf{Scaling Law on \ours (Training v.s. Inference).} We further examine the scaling behavior of recursion in \ours by jointly varying the training-time and inference-time recursion rounds. Figure \ref{fig:overall} (Up) illustrates the performance landscape of \ours under different training and inference settings. 
Increasing inference depth continues to improve systems trained with fewer rounds, while deeper training shifts the entire performance frontier upward, with the strongest results consistently appearing in the upper-right region where both are large. 
This trend suggests a complementary training-inference scaling effect in \ours: training recursion progressively teaches the system to form refinement-ready latent states, and subsequent inference recursion translates this learned recursive structure into additional test-time gains.

\begin{table}[!t]
\centering \small
\caption{\textbf{Comparison of \ours with Other Methods.} 
We evaluate \ours at recursion round $r=3$. Under the same training budget and model setups, \ours consistently outperforms advanced single-agent methods, alternative MAS frameworks, and recursive computation baselines.
}
\vspace{-5pt}
\label{tab:trainable_comparison}
\resizebox{\linewidth}{!}{
\begin{tabular}{lcccccc}
\toprule
\textbf{Method} & \textbf{MATH500} & \textbf{AIME2025} & \textbf{AIME2026} & \textbf{GPQA-D} & \textbf{LiveCodeBench} & \textbf{MedQA} \\ \midrule
Single Agent (w/ LoRA) & 83.1 & 70.0 & 73.3 & 62.0 & 37.4 & 76.1 \\
Single Agent (w/ Full-SFT) & 83.2 & 73.3 & 76.7 & 62.8 & 38.6 & 77.0 \\
\midrule
Mixture-of-Agents (MoA) & 79.8 & 60.0 & 63.3 & 47.6 & 27.0 & 57.5 \\
TextGrad & 84.9 & 73.3 & 76.7 & 62.5 & 39.8 & 77.2 \\
\midrule
LoopLM & 84.6 & 66.7 & 63.3 & 48.1 & 24.9 & 56.4 \\
Recursive-TextMAS & 85.8 & 73.3 & 73.3 & 61.6 & 38.7 & 77.0 \\
\midrule
\textbf{\ours} & \textbf{88.0} & \textbf{86.7} & \textbf{86.7} & \textbf{66.2} & \textbf{42.9} & \textbf{79.3} \\
\bottomrule
\end{tabular}
}
\vspace{-10pt}
\end{table}

\vspace{-10pt}
\subsection{Broader Comparison with Alternative Architectures and Training Frameworks}  

Table \ref{tab:trainable_comparison} compares \ours at the whole-system level against a broader set of baselines, including single fine-tuned agents, representative multi-agent frameworks, and alternative recursive methods. 
To ensure fair comparison, all methods are instantiated with identical backbone models and comparable training budgets (e.g., matched trainable parameter counts, recursion depth, training set).

Overall, \ours delivers a consistent whole-system advantage, achieving an average performance improvement of 8.3\% over the strongest baseline on each benchmark. With the same training data, fine-tuning individual agents strengthens performance relative to their off-the-shelf versions, while \ours delivers further gains by optimizing cross-agent collaboration at the system level.
In addition, \ours remains the performance advantage compared to advanced architectures such as TextGrad and LoopLM, especially on reasoning-intensive tasks (e.g., accuracy gains of 18.1\% on AIME2025, 13.0\% on AIME2026, and 5.4\% on GPQA-Diamond).

\vspace{-10pt}
\subsection{Can \ours Generalize across Diverse Collaboration Patterns?}

Beyond the sequential setting, we further instantiate \ours under three additional MAS collaboration patterns in Table \ref{tab:mas_config} to assess whether our method is agnostic to any specific system architecture and generalizes across diverse usage scenarios. As shown in Figure~\ref{fig:overall} (Down), we compare the accuracy of \ours against strong standalone agents within each collaboration pattern.

In \textit{Mixture-style}, \ours achieves an average improvement of 6.2\% over the strongest domain specialist on each benchmark, suggesting that recursive interaction enables non-trivial cross-domain composition beyond what can be attained by selecting one individual specialist alone. 
In \textit{Deliberation-style}, we evaluate tool use on both mathematical and search-intensive tasks. \ours improves the original tool-calling agent by 4.8\%, showing that recursive latent coordination remains effective in tool-calling settings through iterative interaction with the Reflector. 
Finally, in \textit{Distillation-style}, \ours improves the learner by 8.0\% while retaining 1.5$\times$ end-to-end speed advantage over the expert. In this way, \ours distills much of the expert’s capability into a more efficient system.
We leave detailed reports of Figure~\ref{fig:overall} (Down) in Appendix~\ref{app:collab_style}.

\begin{figure}[!t]
    \centering
    \includegraphics[width=\linewidth]{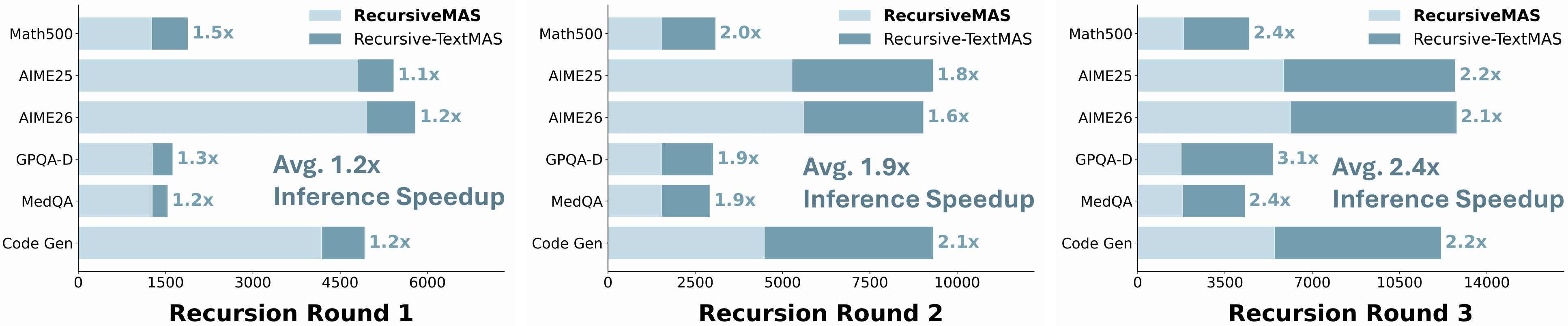}
    \caption{\textbf{Inference Time Speedup of \ours across Three Recursion Rounds.} 
    \ours exhibits increasing inference speedup as the recursion depth increases.
    }
    \label{fig:time_speedup}
\end{figure}

\begin{figure}[!t]
    \centering
    \includegraphics[width=\linewidth]{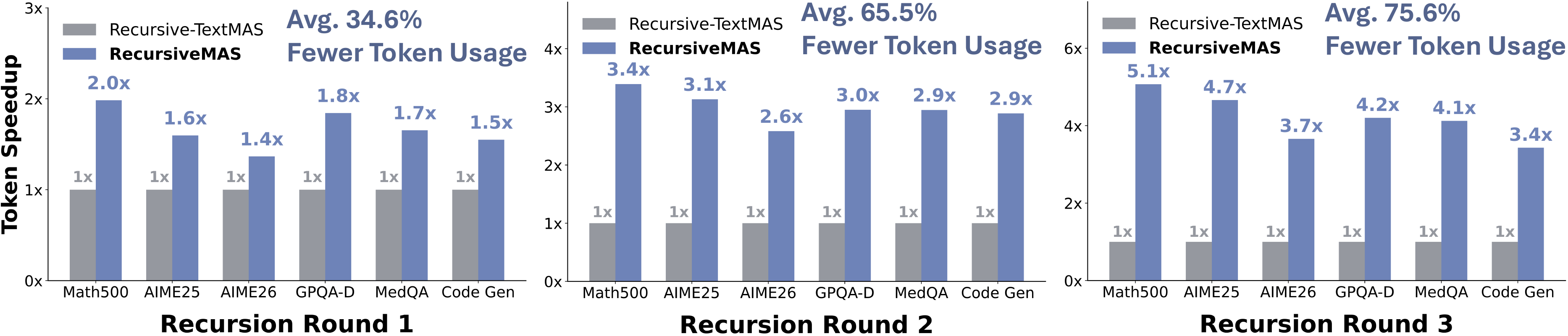}
    \caption{
    \textbf{Token Reduction of \ours across Three Recursion Rounds.} 
    As recursion deepens, \ours reduces substantially more tokens than Recursive-TextMAS.
    }
    \label{fig:token_speedup}
    \vspace{-10pt}
\end{figure}

\vspace{-6pt}
\subsection{Efficiency Analyses on Latent-space Recursion}
\textbf{Inference Time Speedup.} We analyze the efficiency of \ours 
to empirically support our complexity advantage in Proposition~\ref{prp:eff}. We first compare \ours against Recursive-TextMAS to study how our advantage on end-to-end inference time scales with recursion depth. As shown in Figure~\ref{fig:time_speedup}, although deeper recursion rounds introduce cost, we find that \ours consistently exhibits efficiency gain, and the advantage further increases as recursion deepens. 
For example, at recursion round $r=1$, \ours already achieves a $1.2\times$ speedup on average, and this advantage grows to $1.9\times$ and $2.4\times$ at larger recursion rounds of $r=2/3$. 
This trend aligns well with our method design, where \ours achieves a favorable scaling behavior by conducting recursive collaboration directly in latent space and avoiding repeated intermediate text generation.

\textbf{Overall Token Usage Reduction.} We next demonstrate the substantial token usage reduction of \ours in Figure~\ref{fig:token_speedup}. 
Within the comparison, we find that the baseline method suffers from rapidly growing token overhead as recursion round increases, while \ours reduces the token usage by $34.6\%$ for the first recursion round, and the reduction scales to $75.6\%$ at $r=3$. 
This is because Recursive-TextMAS repeatedly decode the intermediate text at every recursion round, whereas \ours performs most recursive interaction directly in latent space. 
Overall, \ours enables a much more efficient system-level scaling behavior, and the resulting efficiency gain is amplified as the number of recursion rounds increases.

\vspace{-6pt}
\section{In-depth Analyses on RecursiveMAS}

\textbf{\link Design.}
To validate the effectiveness of \link, we compare our 2-layer residual design against three alternatives: (i) a 1-layer network, (ii) a 1-layer network with the residual connection, and (iii) a 2-layer network without the residual connection. We conduct experiments using the scaled sequential-style \ours and adapt the same architecture for both $\mathcal R_{\text{in}}$ and $\mathcal R_{\text{out}}$.

\begin{wraptable}{r}{0.55\textwidth}
\vspace{-10pt}
\centering
\small
\caption{\textbf{Efficacy on \link Design.} We compare accuracy across alternative architectural designs.}
\label{tab:ablation_link_design}
\vspace{-5pt}
\resizebox{\linewidth}{!}{
\begin{tabular}{lccc}
\toprule
\textbf{\link Design} & Math500 & GPQA-D & LiveCodeBench \\ \midrule
1-Layer      & 84.4 & 63.2 & 40.1 \\
\midrule
Res+1-Layer  & 86.7 & 65.3 & 41.4 \\
\midrule
2-Layer      & 85.6 & 64.5 & 40.5 \\
\midrule
\textbf{Res+2-Layer (ours)}  & \textbf{88.0} & \textbf{66.2} & \textbf{42.9} \\
\bottomrule
\end{tabular}
}
\vspace{-10pt}
\end{wraptable}

As shown in Table~\ref{tab:ablation_link_design}, our 2-layer residual design performs best across all three benchmarks, and the residual connection delivers additional improvements across different backbone models.
For example, on GPQA-Diamond, equipping a single-layer design with a residual branch improves the performance from 63.2\% to 65.3\%, which is even higher than the plain 2-layer design (64.5\%).
These results align with our design intuition in Section~\ref{sec:recursive_link}: by preserving latent semantics while learning only the distributional shift, \link achieves stable training and stronger inference performance.

\textbf{Semantic Representations in Recursion.}
We analyze how the semantic distribution of \ours changes across different recursion rounds. 
Under the scaled sequential setting of \ours, we randomly sample 500 question-answer pairs spanning all downstream domains. We then use the solver agent’s input embedding layer to map each ground-truth answer string into embedding representations, which serves as the reference semantic distribution.
We run \ours at recursion rounds $r=1,2,3$ to generate final answers for all these 500 questions, map the generated answers into embeddings using the same input embedding layer, and visualize both the ground-truth reference (\textcolor[HTML]{797e9b}{"purple"}) and newly generated distributions (\textcolor[HTML]{FC936B}{"orange"}) via PCA projection.

\begin{figure}[!h]
    \centering
    \includegraphics[width=\linewidth]{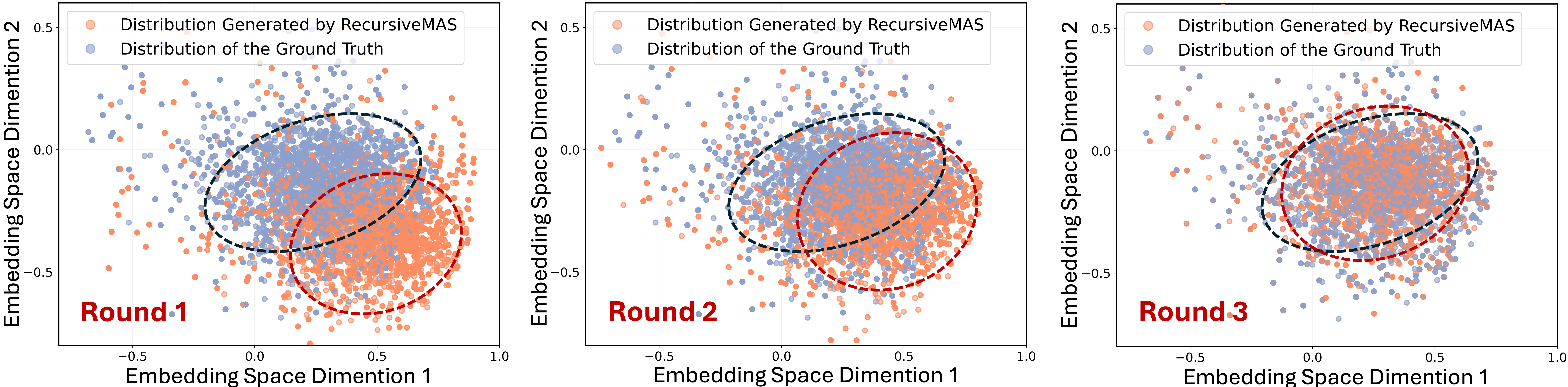}
    \caption{
    \textbf{Semantic Representations of \ours across Differnt Recursion Rounds.} 
    We visualize the semantic distribution of the final answers generated by \ours and the corresponding ground-truth across 500 questions. 
    Increasing recursion rounds progressively aligns the generated distribution of \ours with the ground truth distribution.
    }
    \label{fig:analyses_embedding}
\end{figure}

In Figure~\ref{fig:analyses_embedding}, the generated answers at \(r=1\) remain visibly shifted from the ground-truth distribution, but this discrepancy progressively narrows as depth increases, with the two distributions becoming largely aligned by \(r=3\).
This aligning trend suggests that \ours iteratively refines the latent embeddings and corresponding answers through recursion.
We further take a closer look to examine individual test instances and provide detailed case studies in Appendix~\ref{app:case_diff_recursion}.
Our case studies reveal a common pattern in which \ours may produce an incorrect answer at an early stage, while deeper recursion successfully corrects it through iterative refinement.
Together, these analyses provide further evidence that latent thoughts capture semantically meaningful representations, and that deeper recursion improves alignment toward correct final outputs.

\begin{wrapfigure}{r}{0.43\textwidth}
    \centering
    \includegraphics[width=\linewidth]{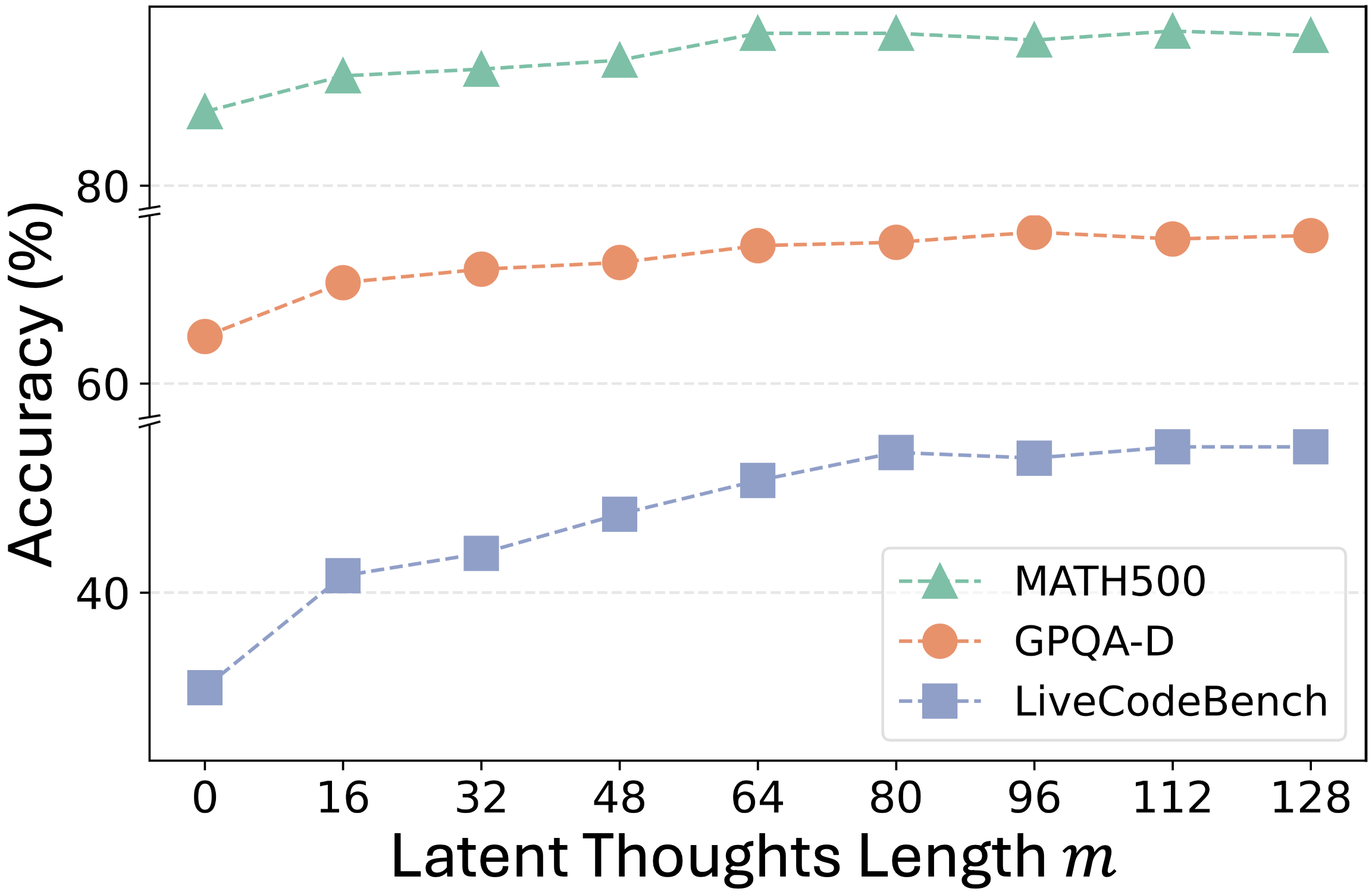}
    \vspace{-15pt}
    \caption{Effectiveness of \ours's latent thoughts with different step lengths.}
    \label{fig:latent_steps}
    \vspace{-10pt}
\end{wrapfigure}
\textbf{Optimal Length of Latent Thoughts Generation.}
We next study and ablate the latent thoughts length $m$ to examine how much of each agent's internal reasoning is sufficient to support effective collaboration. 
Under the scaled sequential-style of \ours, we evaluate a broad range of $m$.
As illustrated in Figure~\ref{fig:latent_steps}, increasing \(m\) improves performance in the early regime. Once \(m\) reaches a moderate scale (around \(m=80\)), performance is stabilized across all benchmarks.
The ablation suggests that \ours enables effective agent reasoning and interaction with only a modest latent-thought budget, in sharp contrast to text-based collaboration that typically requires longer CoT and costly token generation.

\textbf{Training Cost Analysis}
We further analyze the training cost of \ours under the scaled sequential-style MAS setting. 
We compare \ours with direct training methods, including LoRA and full supervised fine-tuning with the same training data and backbone setup. For cost estimation, we follow prior methods \citep{lu2023empirical, liu2025deepseek} to measure the cost based on GPU usage.
As shown in Table~\ref{tab:cost_analysis}, \ours{} utilizes the lowest per-agent GPU memory, trainable parameter count, and estimated cost among all compared training strategies. 
Meanwhile, \ours{} achieves the highest accuracy across all downstream tasks, suggesting that optimizing the lightweight \link{} provides a better cost-performance trade-off than other training methods.

\begin{table}[!t]
\centering
\small
\caption{\textbf{Cost analysis on \ours.}
We report the peak GPU memory usage (GB), number of trainable parameters, estimated cost, and average accuracy (\%) across all downstream tasks.}
\label{tab:cost_analysis}
\resizebox{0.75\linewidth}{!}{
\begin{tabular}{lcccc}
\toprule
\textbf{Methods} & \textbf{GPU Mem.} & \textbf{Trainable Param.} & \textbf{Cost} & \textbf{Avg. Acc.} \\
\midrule
LoRA Training & 21.67 & 15.92M (0.37\%) & \$6.64 & 66.9 \\
\midrule
Full-SFT & 41.40 & 4.21B (100\%) & \$9.67 & 68.6 \\
\midrule
\textbf{\ours} & \textbf{15.29} & \textbf{13.12M (0.31\%)} & \textbf{\$4.27} & \textbf{74.9} \\
\bottomrule
\end{tabular}
}
\vspace{-10pt}
\end{table}

\vspace{-6pt}
\section{Related Works}

\textbf{LLM-based Multi-Agent Systems.}
Current LLMs achieve strong performance on general tasks, but they often exhibit bottlenecks when facing diverse reasoning patterns \citep{mirzadeh2025gsm,valmeekam2023planbench,maheswaran2026squeeze} or domain-specific challenges \citep{chen2025benchmarking}.
To overcome these limitations, Multi-agent systems extend the single LLM paradigm to a collaborative setting \citep{wu2024autogen, yang2024llm, tran2025multi, su2025toolorchestra} by organizing a set of agents with distinct roles that jointly address the problem.
A standard multi-agent system topology involves a sequential configuration \citep{li2023camel, qian2024chatdev}, where agents are assigned in a linear pipeline to decompose and resolve problems in order. Beyond sequential settings, other works also explore mixture-style settings \citep{yun2026graph, ye2025x, wang2025mixture}, where multiple agents with domain expertise reason in parallel, and their outputs are aggregated into a final decision. 
Another line of work seeks to improve MAS through textual feedback signals. For example, related optimization methods 
\citep{yuksekgonul2025optimizing, shen2025optimizing} leverage an LLM to generate natural language feedback to refine contextual inputs and instructions of each agent. Additionally, another study \citep{motwani2024malt} improves MAS by separately training each agent with role-specific responses.
Rather than separate training each individual agent or only leveraging textual feedback, \ours treats MAS as a unified whole, and scales the system performance via recursively refining the latent information flow.

\textbf{Scaling Reasoning via Recursion.} Recent studies explore recursion as an alternative scaling axis for LLMs \citep{geiping2025scaling, li2026learning, bae2025mixture, tang2026looprpt}, where the same computation blocks are reused through multiple recurrent rounds (i.e., loops) to increase reasoning depth and iteratively refine hidden representations. One line of work studies recursive language models that apply shared layers to scale latent reasoning. For instance, LoopLM \citep{zhu2025scaling} introduces pre-trained looped language models with iterative latent computation.
Besides, other work explores other recursive architectures \citep{jolicoeur2025less, zhang2025recursive, wang2025hierarchical}, including tiny recursive networks and recursive self-calling schemes for long-context inference. 
While existing methods in agentic AI primarily focus on recursion inside a single language model, \ours exhibits the first attempt to extend the recursive scaling paradigm to system-level. Additional related works are provided in Appendix~\ref{app:addition_related}.

\vspace{-5pt}
\section{Conclusion}
We introduce \ours, a recursive multi-agent framework that scales agent collaboration through system-level recursion. 
\ours first supports latent-thoughts generation within each agent through inner \link, then connects heterogeneous agents through outer \link, and optimizes the whole system with an inner-outer loop training paradigm.
Theoretically, our framework leads to more stable training dynamics and improves efficiency compared to text-based baselines. 
Our empirical results across mathematical and scientific reasoning, code generation, and search benchmarks show that \ours consistently improves accuracy while substantially reducing inference time and token usage.
Overall, \ours provides a scalable and efficient framework for multi-agent systems to recursively collaborate, refine, and evolve in latent space.

\bibliographystyle{abbrvnat}
\nobibliography*

\begingroup
\hypersetup{urlcolor=black}
\bibliography{main}
\endgroup

\newpage 

\appendix
\renewcommand{\contentsname}{\Large Table of Contents}
{\hypersetup{linkcolor=black}
\tableofcontents
}

\addtocontents{toc}{\protect\setcounter{tocdepth}{2}}

\newpage
\textbf{\Large Appendix}

\section{Theoretical Analysis}

\subsection{Running Complexity Analysis}
\label{app:proposition_proof}

\begin{proposition}[\textbf{Restate of Proposition~\ref{prp:eff}}]
Without \link, a text-based Recursive MAS with the same collaboration structure requires runtime complexity of $\Theta(N(m|V|d_h+(t+m)d_h^2+(t+m)^2d_h))$; In contrast, with \link-enabled collaboration, \ours{} achieves an end-to-end runtime complexity of $\Theta(N(md_h^2+(t+m)d_h^2+(t+m)^2d_h))$. 
\end{proposition}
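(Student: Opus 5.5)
We count the cost of one recursion round by summing over the $N$ agents. Each agent processes $t$ context positions and then generates $m$ further positions. We write its per-agent cost as the sum of three terms:
\begin{itemize}
\item the Transformer backbone over a sequence of length $t+m$;
\item the per-step cost of turning a last-layer state into the next input embedding;
\item the cross-agent hand-off.
\end{itemize}
Depth $L$ and the number of rounds $n$ are fixed architectural constants and are absorbed into $\Theta(\cdot)$. The total is therefore $N$ times the per-agent cost.

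\textbf{Backbone term.} With KV caching, each of the $t+m$ positions pays $\Theta(d_h^2)$ for the QKV/output projections and the MLP. Position $j$ also pays $\Theta(j\,d_h)$ for attention against the cache. Summing over $j\le t+m$ gives $\Theta((t+m)d_h^2+(t+m)^2d_h)$. This term is identical for the two systems, since both process the same number of positions per agent. For the text system, the decoded length is $m$ tokens per agent, which keeps the collaboration structure matched.

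\textbf{Transition term.}
\begin{itemize}
\item \emph{Text-based system.} Each of the $m$ generated positions requires the LM head $h\mapsto W_{\mathrm{vocab}}h$ with $W_{\mathrm{vocab}}\in\mathbb R^{|V|\times d_h}$, costing $\Theta(|V|d_h)$. It also requires a softmax/sampling step costing $\Theta(|V|)$ and an embedding lookup costing $O(d_h)$. This totals $\Theta(m|V|d_h)$.
\item \emph{\ours.} Each step instead applies $\mathcal R_{\mathrm{in}}(h)=h+W_2\sigma(W_1h)$. With hidden width $\Theta(d_h)$, this costs $\Theta(d_h^2)$, so the total is $\Theta(md_h^2)$.
\end{itemize}

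\textbf{Hand-off term.}
\begin{itemize}
\item \emph{\ours.} The outer link $\mathcal R_{\mathrm{out}}$ is applied to the $m$ transmitted latent thoughts. Each application is a $\Theta(d_h^2)$ map (assuming agents' hidden sizes are within constant factors), for a total of $\Theta(md_h^2)$, which is absorbed into the transition term.
\item \emph{Text-based system.} The $m$ decoded tokens are re-embedded by the next agent at $O(md_h)$. Their prefill is already counted in that agent's backbone term.
\end{itemize}
The final-round textual decoding by $A_N$ in \ours contributes at most one extra $O(|V|d_h)$-per-token phase for the answer. We treat its length as a separate output cost that is common to both systems, and exclude it, as in the statement.

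\textbf{Matching lower bounds.} Summing the terms gives the stated upper bounds. The lower bounds hold because each summand is actually incurred: every generated position must run the full projections and the attention against all previous keys, and the text system must evaluate all $|V|$ logits to decode.

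\textbf{Main obstacle.} The delicate step is the bookkeeping rather than any inequality. We must show that the cross-agent context (each $E_{A_j}\oplus\mathcal R_{\mathrm{out}}(H_{A_{j-1}})$) does not inflate the sequence length beyond $\Theta(t+m)$. We must also justify treating $n$, $L$, and the link hidden width as constants.

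For the sequence length, we would first argue that each agent sees its own $t$-length context plus $m$ transferred states. Its length is then $t+2m=\Theta(t+m)$, so the asymptotic form is unchanged. The same holds for the text baseline with $m$ transferred tokens, which keeps the comparison fair.
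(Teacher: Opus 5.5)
Your proposal is correct and follows essentially the same argument as the paper. Both use a per-agent backbone cost of $\Theta((t+m)d_h^2+(t+m)^2d_h)$ shared by the two systems, add a transition cost over the $m$ generated steps of $\Theta(|V|d_h)$ per step for vocabulary decoding versus $\Theta(d_h^2)$ per step for the RecursiveLink, and sum over the $N$ agents. Your additional bookkeeping makes explicit several assumptions that the paper's proof leaves unstated: KV-cached attention, absorbing the outer-link hand-off into $\Theta(md_h^2)$, treating $n$ and $L$ as constants, and excluding the final-round text decode by $A_N$.
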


\begin{proof}[Proof of Proposition~\ref{prp:eff}]
We analyze the runtime complexity for each agent and then extend the result to the full multi-agent system. For each single agent, given the context length is at most $t$, and the generation length is at most $m$, the Transformer processes a sequence of length at most $t+m$, requiring $\Theta((t+m)d_h^2)$ time for the feed-forward layers and $\Theta((t+m)^2d_h)$ time for self-attention. This standard Transformer computation is shared by both \ours{} and text-based Recursive MAS.

The computational difference comes from how \ours process the generated embeddings.
In \ours{}, each of the $m$ latent embeddings is transformed by \link{}, which incurs an additional cost of $\Theta(md_h^2)$.
In text-based Recursive MAS, each embedding must be decoded into an explicit token by projecting it to the vocabulary space and computing logits over $|V|$ tokens, resulting in an additional cost of $\Theta(m|V|d_h)$.

Adding all terms together, our proposed \ours{} needs $\Theta(md_h^2+(t+m)d_h^2+(t+m)^2d_h)$ time for each agent, while text-based Recursive MAS needs $\Theta(m|V|d_h+(t+m)d_h^2+(t+m)^2d_h)$ time.
Since there are $N$ agents in the system, our proposed \ours{} needs $\Theta(N(md_h^2+(t+m)d_h^2+(t+m)^2d_h))$ time, and text-based Recursive MAS needs $\Theta(N(m|V|d_h+(t+m)d_h^2+(t+m)^2d_h))$ time in total. 
\end{proof}

\subsection{Realistic Assumptions}
\label{app:assumptions}
\begin{ASS}
Text-based SFT can be regarded as using $\mathcal R_\text{text}(h)=W_\text{in}\operatorname{softmax}(W_\text{out}h)$ as the recursive link, where $W_\text{in}$ is the token-to-embedding matrix, and $W_\text{out}$ denotes the embedding-to-logits matrix. 
We also assume $\|W_\text{in}\|_2\le O(1)$ and $\|W_\text{out}\|_2\le O(1)$. For RecursiveLink $\mathcal R(h)$, we assume that $W_1,W_2$ follow Kaiming normal initialization, and we only analyze the case where $W_3=I$.
\end{ASS}

\subsection{Learning Advantage Analysis}
\label{app:proof-grad}
\label{app:theorem_proof}

\begin{theorem}[\textbf{Restate of Theorem~\ref{thm:grad}}]
Under the Realistic Assumptions (stated in Appendix \ref{app:assumptions}), if tokens are confident with entropy $\le \epsilon$, where typically $\epsilon \ll 1$: directly applying text-based SFT (denoted by $\mathcal R_\text{text}(h)$) during recursion suffers from gradient vanishing (i.e., gradient norm close to 0); while \ours with the \link $\mathcal{R}$ maintains stable and near constant gradients (i.e., gradient norm close to 1) during looped backpropagation process. Formally, with probability $\ge 1-\delta$,

\begin{align}
\left\|\frac{\partial\mathcal R_\text{text}(h)}{\partial h}\right\|_2\le O(\epsilon)\ll1,\qquad\left\|\frac{\partial\mathcal R(h)}{\partial h}\right\|_2\ge\Omega\left(1-\sqrt{\frac1{d_h}\log\frac1{\delta}}\right).%
\end{align}
\end{theorem}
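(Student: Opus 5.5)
The two bounds are independent, so I will prove them separately: a deterministic Jacobian bound for the text link, and a random-matrix lower bound for \link with $W_3=I$.

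\emph{Text link.} Write $p=\operatorname{softmax}(W_\text{out}h)$. The chain rule gives
\[
\frac{\partial\mathcal R_\text{text}(h)}{\partial h}=W_\text{in}\bigl(\operatorname{diag}(p)-pp^\top\bigr)W_\text{out}.
\]
By the Realistic Assumptions, $\|W_\text{in}\|_2,\|W_\text{out}\|_2\le O(1)$, so it suffices to show $\|\operatorname{diag}(p)-pp^\top\|_2\le O(\epsilon)$.

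Let $p_{\max}=p_k$ be the largest probability and set $q=1-p_k$. The matrix is positive semidefinite, so its spectral norm is at most its trace, $\sum_i p_i(1-p_i)\le 2q$. It remains to bound $q$ by the entropy. If $p_k\ge 1/2$, then
\[
H(p)\ge\sum_{i\ne k}p_i\log(1/p_i)\ge q\log(1/q)\ge q\log 2,
\]
since each $p_i\le q$ for $i\ne k$. Hence $q\le H(p)/\log 2\le O(\epsilon)$. The case $p_k<1/2$ cannot occur once $\epsilon<\log 2$, because then $H(p)\ge -\log p_{\max}>\log 2$. Together these give the first bound.

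\emph{\link.} With $W_3=I$, the Jacobian is $J=I+W_2DW_1$, where $D=\operatorname{diag}(\sigma'(W_1h))$. A crude triangle inequality fails here, because $\|W_2DW_1\|_2$ is $\Theta(1)$ under Kaiming initialization. I will therefore lower bound $\|J\|_2\ge u^\top Ju$ for a single well-chosen unit vector $u$, taken independent of $W_2$ (for example $u=h/\|h\|$, or any fixed unit vector). Then
\[
u^\top Ju=1+u^\top W_2 v,\qquad v:=DW_1u .
\]
Since $W_2$ is independent of $W_1$ and $D$, conditioning on $W_1$ makes $u^\top W_2 v$ Gaussian with variance $\Theta(\|v\|^2/d_h)$ (Kaiming variance $2/d_h$, or $2/d_{\text{hidden}}$ with a hidden width proportional to $d_h$). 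The GELU derivative is bounded, $|\sigma'|\le c$, so $\|v\|\le c\|W_1u\|$. Moreover $\|W_1u\|^2$ is a scaled $\chi^2$ variable concentrating at $\Theta(1)$, and by Laurent--Massart it is $O(1)$ with probability at least $1-\delta/2$ (using $\log(1/\delta)\le O(d_h)$; otherwise the claimed bound is vacuous). A Gaussian tail bound then gives
\[
|u^\top W_2v|\le O\bigl(\sqrt{\log(1/\delta)/d_h}\bigr)
\]
with probability at least $1-\delta/2$. A union bound yields $\|J\|_2\ge 1-O(\sqrt{\log(1/\delta)/d_h})$, which is the claimed $\Omega(\cdot)$ form.

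\emph{Main obstacle.} The delicate step is the second bound. It requires both choosing $u$ independent of the randomness and tracking the variance scale carefully: the $1/\sqrt{d_h}$ fluctuation comes from the quadratic-form projection, not from the operator norm of $W_2DW_1$. I will make the Kaiming scaling explicit and state that the hidden width is $\Theta(d_h)$. If instead $u$ must depend on $h$ and $h$ depends on $W_1$, conditioning on $W_1$ still leaves $W_2$ independent, so the argument goes through unchanged.
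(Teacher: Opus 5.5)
Your proof is correct, and it takes a different route from the paper in both halves.

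For the text link, the paper needs no case split. It bounds $\|S\|_2\le\operatorname{Tr}(S)=1-\|p\|_2^2$ and then uses $-\ln p_i\ge 1-p_i$ (from $\ln z\le z-1$) to get $1-\|p\|_2^2\le\sum_i p_i(-\ln p_i)\le\epsilon$. That gives $\|S\|_2\le\epsilon$ with constant $1$ and no restriction on $\epsilon$. Your detour through $\operatorname{Tr}(S)\le 2(1-p_{\max})$ is valid; for $\epsilon\ge\log 2$ you can simply use $\operatorname{Tr}(S)\le 1$. It costs a factor $2/\log 2$ and gains nothing, since $\|p\|_2^2\le p_{\max}$ means the paper's inequality already yields $1-p_{\max}\le\epsilon$.

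For RecursiveLink the difference is substantive, and your route is the better one. The paper uses exactly the triangle inequality you reject, $|\,\|J\|_2-1\,|\le\|W_2\Sigma'W_1\|_2\le O(\sqrt{\log(1/\delta)/d_h}+1)$, and then reads off $\Omega(1-\sqrt{\log(1/\delta)/d_h})$. The additive $O(1)$ is a genuine constant: $\|W_2\Sigma'W_1\|_2$ is of order one and typically exceeds $1$ under Kaiming scaling. So that step only gives $\|J\|_2\ge 1-C(1+\sqrt{\log(1/\delta)/d_h})$, which is negative for any admissible $C$, and the stated conclusion does not follow. Your test-vector bound $\|J\|_2\ge u^\top Ju=1+u^\top W_2DW_1u$, with $u$ and $D$ independent of $W_2$, replaces that order-one operator norm with a scalar Gaussian of standard deviation $O(d_h^{-1/2})$. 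This is what actually establishes $\|J\|_2\ge 1-O(\sqrt{\log(1/\delta)/d_h})$. That is the intended content of the stated $\Omega(\cdot)$, and it implies the $\Omega(\cdot)$ form up to constants once $\log(1/\delta)\le c\,d_h$.

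The price is pointwise validity. Your bound holds for a given input $h$ independent of the link weights, which is the theorem's setting. An operator-norm argument, had it worked, would have been uniform in $h$.
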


\begin{proof}[Proof of Theorem~\ref{thm:grad}]
We first analyze the gradient behavior of text-based recursive interaction.
By applying the chain rule to $\mathcal{R}_\text{text}(h)$, the gradient matrix is:
$$J_\text{text}=\frac{\partial\mathcal R_\text{text}(h)}{\partial h} = W_\text{in} S W_\text{out},\qquad S = \text{diag}(p) - p p^T,$$
where $p=\operatorname{softmax}(W_\text{out}h)$ is the next token distribution. Then, by the sub-multiplicativity of the spectral norm,
$$\|J_\text{text}\|_2 \le \|W_\text{in}\|_2 \|S\|_2 \|W_\text{out}\|_2 \le O(1) \cdot \|S\|_2 \cdot O(1) = O(\|S\|_2).$$
Because $S$ represents the covariance matrix of a categorical distribution, it is symmetric and positive semi-definite. Thus, its spectral norm is upper-bounded by its trace:
$$\|S\|_2 \le \text{Tr}(S) = \sum_{i=1}^{|V|} (p_i - p_i^2) = \sum_{i=1}^{|V|} p_i - \sum_{i=1}^{|V|} p_i^2 = 1 - \|p\|_2^2.$$
Using the logarithm inequality $\ln z \le z - 1$ (for all $z>0$), we can lower-bound the entropy:
$$\epsilon \ge \sum_{i=1}^{|V|}p_i(-\ln p_i)\ge \sum_{i=1}^{|V|} p_i(1 - p_i) = 1 - \|p\|_2^2.$$
Substituting this inequality back into the norm bound yields:
$$\|S\|_2 \le 1 - \|p\|_2^2 \le \epsilon.$$
Therefore, combining the constants, we have:
$$\left\|\frac{\partial\mathcal R_\text{text}(h)}{\partial h}\right\|_2=\|J_\text{text}\|_2\le O(\epsilon).$$

We next analyze the gradient behavior of \ours.
Applying the chain rule to $\mathcal{R}(h)$, the gradient matrix is:
$$J = \frac{\partial\mathcal R(h)}{\partial h} = I + W_2 \Sigma' W_1,$$
where $\Sigma' = \text{diag}(\sigma'(W_1 h))$. By the triangle inequality for the matrix norm,
$$\Big| \|J\|_2 - 1 \Big|=\Big| \|J\|_2 - \|I\|_2 \Big| \le \|J-I\|_2 = \|W_2 \Sigma' W_1\|_2.$$
Since $W_1,W_2$ follow Kaiming initialization, and the GELU function $\sigma$ has $|\sigma'|\le O(1)$, then by the subgaussian matrix concentration inequality, $\|W_2 \Sigma' W_1\|_2\le O\left(\sqrt{\frac{1}{d_h}\log\frac1{\delta}}+1\right)$ with probability $\ge1-\delta$. It follows that:
$$\|J\|_2\ge\Omega\left(1-\sqrt{\frac1{d_h}\log\frac1{\delta}}\right).$$
\end{proof}

\newpage
\section{Experiment Setups}

\subsection{Evaluation Datasets}
\label{app:datasets}
We introduce all datasets used in our experiments as follows:

\textbf{Mathematical Reasoning.}
\begin{itemize}[leftmargin=*]
    \item \textbf{MATH500} \citep{math500} is a widely used subset of the MATH benchmark, containing mathematical problems across algebra, geometry, number theory, probability, and combinatorics. 
    
    \item \textbf{AIME2025} \citep{aime25} contains 30 challenging problems from the 2025 American Invitational Mathematics Examination. These problems require olympiad-style derivations and precise numerical answers, providing a compact but difficult benchmark for mathematical reasoning.
    
    \item \textbf{AIME2026} \citep{aime26} follows the same AIME-style questions with 30 challenging competition-level math problems. We use it to further test performance and generalization on difficult mathematical reasoning tasks.
\end{itemize}

\textbf{Scientific and Medical Tasks.}
\begin{itemize}[leftmargin=*]
    \item \textbf{GPQA-Diamond} \citep{gpqa} is the most difficult split of GPQA, consisting of graduate-level multiple-choice questions in biology, physics, and chemistry. It requires specialized scientific knowledge and careful multi-step reasoning beyond shallow factual recall.
    
    \item \textbf{MedQA} \citep{medqa} contains medical licensing-style questions that assess biomedical knowledge, clinical reasoning, and diagnostic decision-making. The benchmark requires integrating domain-specific evidence from patient scenarios and selecting the most appropriate answer.
\end{itemize}

\textbf{Code Generation.}
\begin{itemize}[leftmargin=*]
    \item \textbf{LiveCodeBench-v6} \citep{lcb} is a contamination-resistant code generation benchmark built from recent programming problems. It evaluates whether models can synthesize functionally correct programs under realistic problem specifications and hidden test cases.
    
    \item \textbf{MBPP Plus} \citep{codeplus} extends the original MBPP benchmark with more comprehensive test cases for Python program synthesis. The stricter execution-based evaluation provides a more reliable measure of functional correctness.
\end{itemize}

\textbf{Search-based Question Answering.}
\begin{itemize}[leftmargin=*]
    \item \textbf{HotpotQA} \citep{yang2018hotpotqa} is a multi-hop question answering benchmark based on Wikipedia. It requires models to gather and combine evidence from multiple supporting facts, making it suitable for evaluating search-based reasoning.
    
    \item \textbf{Bamboogle} \citep{Bamboogle} is a compact but challenging benchmark for search-intensive multi-hop reasoning. Its questions often require decomposition and intermediate retrieval steps before composing the final answer.
\end{itemize}

\subsection{Compared Baselines}
\label{app:baseline}
We compare our method against the following baselines:

\textbf{Single-Agent Fine-tuning Baselines.}
\begin{itemize}[leftmargin=*]
    \item \textbf{Single Agent (w/ LoRA)} uses the final agent from the corresponding collaboration pattern and trains it with LoRA adapters using the same training data as \ours.
    
    \item \textbf{Single Agent (w/ Full-SFT)} further fine-tunes all parameters of the same single-agent backbone using fully supervised fine-tuning. 
\end{itemize}

\textbf{Representative Multi-Agent Frameworks.}
\begin{itemize}[leftmargin=*]
    \item \textbf{Mixture-of-Agents (MoA)}~\citep{wang2025mixture} organizes multiple LLM agents into a layered multi-agent system, where agents in each layer aggregate responses from the previous layer to produce refined outputs as the final answer. 
    
    \item \textbf{TextGrad}~\citep{yuksekgonul2025optimizing} optimizes multi-agent systems by propagating natural-language feedback as textual gradients. We use TextGrad as a baseline method for text-mediated system optimization.
\end{itemize}

\textbf{Recursion-based Methods.}
\begin{itemize}[leftmargin=*]
    \item \textbf{LoopLM}~\citep{zhu2025scaling} is a looped language model that performs recursive computation with shared transformations in latent space. In our evaluation, we mainly adopt the Ouro-2.6B model.
    
    \item \textbf{Recursive-TextMAS} uses the same recursive multi-agent collaboration structure as \ours, but agents communicate through explicit text rather than latent representations. 
\end{itemize}

\subsection{Additional Implementation Details}
\label{app:train}

\textbf{Training Data Curation.}
To optimize \ours under our inner-outer training pipeline, we construct role-specific supervision targets for each agent across all collaboration patterns.
We start by collecting question-answer pairs as raw training samples from four domains, including s1K~\citep{s1}, m1K~\citep{m1}, OpenCodeReasoning~\citep{opencode}, and ARPO-SFT~\citep{dong2025agentic}. 
For each training sample, we rewrite the original answer into agent-level targets according to the role assignments of each collaboration pattern, as follows:

\begin{itemize}[leftmargin=*]
    \item For \textit{Sequential-Style}, we use Qwen3.5-397B-A17B to rewrite the answers into an initial step-by-step plan and a refined critic-guided plan. During training, the initial plan is used as the supervision target for the Planner, the critic-guided plan is used for the Critic, and the original answer is retained for the Solver.
    \item For \textit{Mixture-Style}, each specialist in the MAS first generates responses for questions from its specialized domain, and these responses are then used to supervise the corresponding specialist. The ground truth answers are used as targets for the Summarizer.
    \item For \textit{Distillation-Style}, the Expert first generates guidance-style responses for each training sample, which are then used as supervision targets for the Expert. The Learner is supervised directly by the ground-truth answers.
    \item For \textit{Deliberation-Style}, we use the ground truth answers as the supervision targets for both the Reflector and Tool-Caller agent.
\end{itemize}

After the role-specific data construction, each agent is assigned its own training pairs, where the input consists of the original question, and the output is the corresponding role-based response.
These agent-level pairs are then used as the supervision data for subsequent training.

\textbf{Implementation Details.}
During training, all base LLMs' parameters are frozen, and we only update the inner and outer \link using AdamW with a batch size of 4 and a maximum sequence length of 4096 tokens.
During inference, the maximum generation length is set for 2000 tokens for MATH500, 4000 tokens for MedQA, GPQA-Diamond, LiveCodeBench, and MBPP Plus, and 16000 tokens for AIME2025/2026. 
For all Qwen models, we enables the official Instruct mode \citep{qwen3.5} for more efficient and controllable answer generation. 
For the Deliberation-Style MAS, we provide a standard Python environment and a Tavily~\citep{tavily} search API as external tools.
We implement \ours and baselines with both HuggingFace Transformer \citep{huggingface_transformers_docs} and vLLM backend \citep{kwon2023efficient}. All experiments are conducted on H100 and A100 GPUs.

\textbf{Evaluation Protocol.} 
Across all non-code generation tasks, we first normalize the extracted answer by removing extra whitespace, stripping punctuation, and converting text to lowercase before applying task-specific correctness checks.
\begin{itemize}[leftmargin=*]
    \item For \textit{numerical problems (MATH500, AIME2025, AIME2026)}, we compare the numerical value of the extracted answer with the ground truth. The answer is considered correct if the two values are mathematically equivalent. 

    \item For \textit{multiple-choice questions (GPQA-Diamond MedQA)}, we directly compare the predicted choice with the ground truth letter, where an exact option match is counted as correct. 

    \item For \textit{code generation tasks (LiveCodeBench and MBPP Plus)}, we first extract the generated code block and then execute it with provided test cases in a sandboxed python environment. Each individual test case is assigned a timeout of 10 seconds to prevent non-terminating programs. 

    \item For \textit{search-based tasks (HotpotQA, Bamboogle)}, we follow the standard LLM-as-a-judge evaluation method~\citep{li2025flow} and use the Qwen3.5-397B-A17B model as a binary judge to determine whether the generated answer is correct with respect to the ground truth answer. 
\end{itemize}

When an output reaches the maximum generation length without producing an extractable answer, we follow standard early-stopping evaluation methods~\citep{s1, yang2025qwen3} by appending ``Final Answer:'' to the model output to elicit a final response.

\newpage
\section{Additional Related Work}
\label{app:addition_related}
\textbf{Latent-Space Collaboration.}
Beyond text-based interaction, recent studies have explored leveraging the latent space as an alternative medium for LLM communication. 
One line of work studies transferring hidden embeddings for cross-model communication~\citep{yu2026latent, du2025enabling}, while other works investigate reusing internal states to share information across LLMs~\citep{fu2025cache, ye2025kvcomm}.
Recent studies extend this scheme to agentic settings, where latent interfaces are used to support coordination among multiple agents~\citep{zheng2025thought, zou2025latentmas}. 
Different from these studies, \ours treats latent information as part of a system-level recursive information flow, enabling heterogeneous agents to recursively collaborate and improve as a unified MAS.

\begin{table}[!t]
\centering \small
\caption{Comparison of \ours in Distillation-Style Multi-agent System. \ours improves the Learner agent by 8.0\% via distilling knowledge from the Expert agent while retaining a 1.5× end-to-end speed advantage over the Expert agent.}

\label{tab:distill_mas}
\begin{tabular}{l c ccccc}
\toprule
\textbf{Method (Distillation-Style)} & \textbf{Metric} & \textbf{AIME2026} & \textbf{GPQA-D} & \textbf{LiveCodeBench} & \textbf{MBPP+} & \textbf{MedQA} \\ \midrule
\multirow{2}{*}{Expert Model} & Acc. & 90.0 & 72.7 & 46.2 & 73.4 & 86.0 \\
& Time & 9473 & 2558 & 9352 & 2342 & 2124 \\
\midrule
\multirow{2}{*}{Learner Model} & Acc. & 76.7 & 61.4 & 38.4 & 67.5 & 77.9  \\
& Time & 4495 & 1289 & 5396 & 1171 & 1183 \\
\midrule
\multirow{2}{*}{\textbf{\ours}} & Acc.  & 83.3 & 70.0 & 40.1 & 71.9 & 83.0 \\
& Time & 5967 & 1671 & 6863 & 1516 & 1436 \\
\bottomrule
\end{tabular}
\end{table}

\section{Additional Experiments}

\subsection{Results on Different Collaboration Patterns}
\label{app:collab_style}

\begin{table}[h]
\centering \small
\caption{Comparison of \ours in Mixture-Style Multi-agent System.}
\label{tab:mixture_mas}
\resizebox{0.85\linewidth}{!}{
\begin{tabular}{l cccc}
\toprule
\textbf{Method (Mixture-Style)} & \textbf{AIME2026} & \textbf{GPQA-Diamond} & \textbf{LiveCodeBench} & \textbf{MedQA} \\ \midrule
Math Specialist & 43.3 & 37.4 & 18.9 & 29.0 \\
Code Specialist & 13.3 & 26.2 & 21.5 & 43.3 \\
Science Specialist & 10.0 & 27.0 & 7.6 & 48.1 \\
\midrule
\textbf{\ours} & \textbf{46.7} & \textbf{43.0} & \textbf{23.8} & \textbf{61.7} \\
\bottomrule
\end{tabular}
}
\end{table}

\begin{table}[h]
\centering \small
\caption{Comparison of \ours in Deliberation-Style Multi-agent System.}
\label{tab:deliberation_mas}
\resizebox{0.85\linewidth}{!}{
\begin{tabular}{l cccc}
\toprule
\textbf{Method (Deliberation-Style)} & \textbf{AIME2026} & \textbf{GPQA-Diamond} & \textbf{HotpotQA} & \textbf{Bamboogle} \\ \midrule
Reflector & 76.7 & 61.2 & 27.5 & 40.9 \\
Tool-Caller & 86.7 & 63.1 & 39.6 & 49.8 \\
\midrule
\textbf{\ours} & 90.0 & 65.0 & 41.4 & 53.7 \\
\bottomrule
\end{tabular}
}
\end{table}

We report the detailed results of \ours under three additional collaboration patterns in Tables~\ref{tab:mixture_mas}, \ref{tab:distill_mas}, and \ref{tab:deliberation_mas}, corresponding to the summarized results in Figure~\ref{fig:overall} (Down). 
In both Mixture and Deliberation settings, \ours achieves consistent accuracy gains over the strongest individual agent in each setting. 
In Distillation Style, \ours improves performance over the Learner while requiring substantially less inference time than the Expert. 
Overall, these results show that \ours provides both performance gains and efficiency benefits across diverse MAS collaboration patterns, further demonstrating the generality of our method.

\subsection{Ablations on Latent Thoughts Lengths}
\label{app:latent_steps}
\begin{table}[h]
\centering \small
\caption{Ablation Study on Length of Latent Thoughts $m$ transferred across agents on \ours.}
\label{tab:ablation_latent_steps}
\resizebox{0.8\linewidth}{!}{
\begin{tabular}{l ccccccccc}
\toprule
\textbf{Latent Steps} & \textbf{0} & \textbf{16} & \textbf{32} & \textbf{48} & \textbf{64} & \textbf{80} & \textbf{96} & \textbf{112} & \textbf{128} \\ \midrule
Math500 & 83.3 & 84.9 & 85.2 & 85.6 & 86.8 & 86.8 & 86.5 & 86.9 & 86.7 \\
\midrule
GPQA-D & 61.4 & 62.0 & 62.8 & 63.6 & 64.1 & 64.2 & 64.5 & 64.3 & 64.4 \\
\midrule
LiveCodeBench & 38.1 & 40.3 & 40.7 & 41.4 & 42.0 & 42.5 & 42.2 & 42.6 & 42.6 \\
\bottomrule
\end{tabular}
}
\end{table}

We provide detailed ablation results on the length of latent thoughts $m$ in Table~\ref{tab:ablation_latent_steps}, corresponding to the plot in Figure~\ref{fig:latent_steps}.
As $m$ increases, \ours consistently improves across all benchmarks, and the performance gradually saturates around $m=80$, suggesting that a moderate latent thought budget is sufficient for effective latent collaboration.

\newpage
\section{Prompt Template for RecursiveMAS}

\begin{tcolorbox}[
  title={Prompt Template for Sequential-Style \ours},
colframe=green!30!black,
colback=green!2!white,
colbacktitle=green!35!black!80,
coltitle=white,
borderline={0.8pt}{0pt}{green!50!black},
  fonttitle=\bfseries,
  width=\textwidth,
  left=0mm,
  right=0mm,
  boxrule=1pt,
  top=1mm,
  bottom=1mm,
  arc=1mm
]
\footnotesize
\textbf{System Prompt for All Agents:}\\
You are a helpful assistant. \\

\textbf{User Prompt for Planner Agent:}\\
You are a planner agent in a recursive multi-agent system. 
Here is the latent information from previous round: \text{\{Latent Thought Embeddings\}}.
Given the latent information, you should output a step-by-step plan to solve the question: \text{\{Question\}} \\

\textbf{User Prompt for Critic Agent:}\\
You are a critic agent in a recursive multi-agent system.
Here is the latent information from previous agent: \text{\{Latent Thought Embeddings\}}.
Given the latent information, you should critique the initial plan and output an improved plan to solve the question: \text{\{Question\}} \\

\textbf{User Prompt for Solver Agent:}\\
You are a solver agent in a recursive multi-agent system.
Here is the latent information from previous agent: \text{\{Latent Thought Embeddings\}}
Given the latent information, you should solve the question and provide the final answer: \text{\{Question\}} \\
Solve the question and put the final answer inside $\backslash$boxed\{\}.

\end{tcolorbox}

\begin{tcolorbox}[
  title={Prompt Template for Mixture-Style \ours},
colframe=green!30!black,
colback=green!2!white,
colbacktitle=green!35!black!80,
coltitle=white,
borderline={0.8pt}{0pt}{green!50!black},
  fonttitle=\bfseries,
  width=\textwidth,
  left=0mm,
  right=0mm,
  boxrule=1pt,
  top=1mm,
  bottom=1mm,
  arc=1mm
]
\footnotesize
\textbf{System Prompt for All Agents:}\\
You are a helpful assistant. \\

\textbf{User Prompt for Math Specialist Agent:}\\
You are a math specialist agent in a recursive multi-agent system.
Here is the latent information from previous round: \text{\{Latent Thought Embeddings\}}
Given the latent information, you should provide a domain-specific answer for the question: \text{\{Question\}} \\

\textbf{User Prompt for Science Specialist Agent:}\\
You are a science specialist agent in a recursive multi-agent system.
Here is the latent information from previous round: \text{\{Latent Thought Embeddings\}}
Given the latent information, you should provide a domain-specific answer for the question: \text{\{Question\}} \\

\textbf{User Prompt for Code Specialist Agent:}\\
You are a code specialist agent in a recursive multi-agent system.
Here is the latent information from previous round: \text{\{Latent Thought Embeddings\}}
Given the latent information, you should provide a domain-specific answer for the question: \text{\{Question\}} \\

\textbf{User Prompt for Summarizer Agent:}\\
You are a summarizer agent in a recursive multi-agent system.
Here is the latent information from the math specialist: \text{\{Math Specialist Latent Thought Embeddings\}}.
Here is the latent information from the code specialist: \text{\{Code Specialist Latent Thought Embeddings\}}.
Here is the latent information from the science specialist: \text{\{Science Specialist Latent Thought Embeddings\}}.
Given the latent information from all previous specialists, you should aggregate their reasoning and provide the final answer to the question: \text{\{Question\}} \\
Put the final answer inside $\backslash$boxed\{\}. \\

\end{tcolorbox}

\begin{tcolorbox}[
  title={Prompt Template for Distillation-Style \ours},
colframe=green!30!black,
colback=green!2!white,
colbacktitle=green!35!black!80,
coltitle=white,
borderline={0.8pt}{0pt}{green!50!black},
  fonttitle=\bfseries,
  width=\textwidth,
  left=0mm,
  right=0mm,
  boxrule=1pt,
  top=1mm,
  bottom=1mm,
  arc=1mm
]
\footnotesize
\textbf{System Prompt for All Agents:}\\
You are a helpful assistant. \\

\textbf{User Prompt for Expert Agent:}\\
You are an expert agent in a recursive multi-agent system.
Here is the latent information from previous round: \text{\{Latent Thought Embeddings\}}.
Given the latent information, you should provide expert-level guidance to solve the question: \text{\{Question\}} \\

\textbf{User Prompt for Learner Agent:}\\
You are a learner agent in a recursive multi-agent system.
Here is the latent information from previous agent: \text{\{Latent Thought Embeddings\}}.
Given the latent information, you should use the expert guidance to solve the question and provide the final answer: \text{\{Question\}} \\
Put the final answer inside $\backslash$boxed\{\}. \\

\end{tcolorbox}

\begin{tcolorbox}[
  title={Prompt Template for Deliberation-Style \ours},
colframe=green!30!black,
colback=green!2!white,
colbacktitle=green!35!black!80,
coltitle=white,
borderline={0.8pt}{0pt}{green!50!black},
  fonttitle=\bfseries,
  width=\textwidth,
  left=0mm,
  right=0mm,
  boxrule=1pt,
  top=1mm,
  bottom=1mm,
  arc=1mm
]
\footnotesize
\textbf{System Prompt for All Agents (with Tool-Calling Instructions):}\\
You are a helpful assistant that can solve the given question with the help of external tools, including a web search tool and a Python interpreter. For each question, you should reason carefully and use tools when they are necessary.
You can invoke the web search tool to retrieve factual information, and use the Python interpreter to perform calculations or execute code.
The search query and search result should be enclosed within \texttt{<search>} \texttt{</search>} and \texttt{<result>} \texttt{</result>} tags respectively.
The Python code and execution result should be enclosed within \texttt{<python>} \texttt{</python>} and \texttt{<result>} \texttt{</result>} tags respectively. \\

\textbf{User Prompt for Reflector Agent:}\\
You are a reflector agent in a recursive multi-agent system.
Here is the latent information from previous round: \text{\{Latent Thought Embeddings\}}.
Given the latent information, you should reflect on the current solution and provide an improved solution to the question: \text{\{Question\}} \\

\textbf{User Prompt for Tool-Caller Agent:}\\
You are a tool-caller agent in a recursive multi-agent system.
Here is the latent information from the Reflector agent: \text{\{Latent Thought Embeddings\}}.
Given the latent information, you should solve the question: \text{\{Question\}} \\
Put the final answer inside $\backslash$boxed\{\}. \\

\end{tcolorbox}

\newpage
\section{Case Study on Different Recursion Rounds}
\label{app:case_diff_recursion}

\begin{tcolorbox}[
  title={Output of \ours in Recursion Round 1 (On MATH500)},
  colframe=cyan!40!black,      %
  colback=cyan!6!white,        %
  colbacktitle=cyan!50!black!80,  %
  coltitle=white,                %
  fonttitle=\bfseries,
  width=\textwidth,
  left=0mm,
  right=0mm,
  boxrule=1pt,
  borderline={0.8pt}{0pt}{cyan!60!black},
  top=1mm,
  bottom=1mm,
  arc=1mm
]
\footnotesize
\textbf{Question:}\\
For how many positive integers $n>1$ is it true that $2^{24}$ is a perfect $n^{\text{th}}$ power?
\\

\hrule 
\vspace{8pt}
\textbf{Final Answer of \ours:}\\
First, let's express \( k \) in terms of its prime factorization. Since \( 2^{24} \) is a power of 2, \( k \) must also be a power of 2. Let \( k = 2^m \) for some integer \( m \). Then we have:
\[
2^{24} = (2^m)^n = 2^{mn}
\]
This equation tells us that \( 24 = mn \). We need to find all pairs of positive integers \( (m, n) \) such that \( mn = 24 \) and \( n > 1 \).

Let's list the factor pairs of 24:
\[
(1, 24), (2, 12), (3, 8), (4, 6), (6, 4), (8, 3), (12, 2), (24, 1)
\]
We discard the pairs where \( n = 1 \) because we are looking for \( n > 1 \). The remaining pairs are:
\[
(2, 12), (3, 8), (4, 6), (6, 4), (8, 3), (12, 2)
\]
Each pair \((m, n)\) corresponds to a valid \( n \) greater than 1. Therefore, the possible values of \( n \) are 2, 3, 4, 6, 8, and 12.

Counting these values, we find there are 6 possible values for \( n \).

Thus, the number of positive integers \( n > 1 \) for which \( 2^{24} \) is a perfect \( n \)-th power is \(\boxed{6}\). \textcolor{red}{\textit{Incorrect Answer} \ding{55}}

\end{tcolorbox}

\begin{tcolorbox}[
  title={Output of \ours in Recursion Round 2 (On MATH500)},
  colframe=cyan!40!black,      %
  colback=cyan!6!white,        %
  colbacktitle=cyan!50!black!80,  %
  coltitle=white,                %
  fonttitle=\bfseries,
  width=\textwidth,
  left=0mm,
  right=0mm,
  boxrule=1pt,
  borderline={0.8pt}{0pt}{cyan!60!black},
  top=1mm,
  bottom=1mm,
  arc=1mm
]
\footnotesize
\textbf{Question:}\\
For how many positive integers $n>1$ is it true that $2^{24}$ is a perfect $n^{\text{th}}$ power?
\\

\hrule 
\vspace{8pt}
\textbf{Final Answer of \ours:}\\
First, let's express \( 2^{24} \) in terms of its prime factorization: \( 2^{24}\)

For \( 2^{24} \) to be a perfect \( n^{\text{th}} \) power, \( n \) must be a divisor of 24. This is because if \( 2^{24} = k^n \), then \( k \) must be a power of 2, say \( k = 2^m \). Therefore, we have:
\[
2^{24} = (2^m)^n = 2^{mn}
\]

Equating the exponents, we get:
\[
24 = mn
\]

We need to find all pairs \((m, n)\) such that \( mn = 24 \) and \( n > 1 \). Let's list the divisors of 24:
\[
1, 2, 3, 4, 6, 8, 12, 24
\]

Since \( n > 1 \), we exclude 1 from the list of divisors. The remaining divisors are:
\[
2, 3, 4, 6, 8, 12, 24
\]

Thus, the number of positive integers \( n > 1 \) for which \( 2^{24} \) is a perfect \( n^{\text{th}} \) power is: \( \boxed{7} \) \textcolor{impr}{\textit{Correct Answer} \ding{51}}

\end{tcolorbox}

\begin{tcolorbox}[
  title={Output of \ours in Recursion Round 3 (On MATH500)},
  colframe=cyan!40!black,      %
  colback=cyan!6!white,        %
  colbacktitle=cyan!50!black!80,  %
  coltitle=white,                %
  fonttitle=\bfseries,
  width=\textwidth,
  left=0mm,
  right=0mm,
  boxrule=1pt,
  borderline={0.8pt}{0pt}{cyan!60!black},
  top=1mm,
  bottom=1mm,
  arc=1mm
]
\footnotesize
\textbf{Question:}\\
For how many positive integers $n>1$ is it true that $2^{24}$ is a perfect $n^{\text{th}}$ power?
\\

\hrule 
\vspace{8pt}
\textbf{Final Answer of \ours:}\\
First, let's express \( 2^{24} \) in terms of its prime factorization:

For \( 2^{24} \) to be a perfect \( n \)-th power, \( n \) must be a divisor of 24. This is because if \( 2^{24} = k^n \), then \( k \) must be of the form \( 2^m \) for some integer \( m \), and thus we have
\(
(2^m)^n = 2^{mn} = 2^{24}
\), this implies:
\[
mn = 24
\]

We need to find all positive divisors of 24. The divisors of 24 are:
\[
1, 2, 3, 4, 6, 8, 12, 24
\]

Since we are looking for \( n > 1 \), we exclude 1 from the list. The remaining divisors are:
\[
2, 3, 4, 6, 8, 12, 24
\]

Thus, the number of positive integers \( n > 1 \) for which \( 2^{24} \) is a perfect \( n \)-th power is:
\(
\boxed{7}
\) \textcolor{impr}{\textit{Correct Answer} \ding{51}}

\end{tcolorbox}

\begin{tcolorbox}[
  title={Output of TextMAS (On MATH500)},
  colframe=cyan!40!black,      %
  colback=cyan!6!white,        %
  colbacktitle=cyan!50!black!80,  %
  coltitle=white,                %
  fonttitle=\bfseries,
  width=\textwidth,
  left=0mm,
  right=0mm,
  boxrule=1pt,
  borderline={0.8pt}{0pt}{cyan!60!black},
  top=1mm,
  bottom=1mm,
  arc=1mm
]
\footnotesize
\textbf{Question:}\\
For how many positive integers $n>1$ is it true that $2^{24}$ is a perfect $n^{\text{th}}$ power?
\\

\hrule 
\vspace{8pt}
\textbf{Output of Planner Agent:}\\
Ok, I need to provide a step-by-step plan to solve the problem.
\begin{itemize}
\item Step 1: Determine the prime factorization of $2^{24}$.

\item Step 2: Recognize that $2^{24}$ can be expressed as a perfect $n^{\text{th}}$ power if the exponent $24$ is divisible by $n$.

\item Step 3: Find all positive integers $n > 1$ such that $24$ is divisible by $n$.

\item Step 4: Count the number of such integers $n$.

\item Step 5: Conclude that the number of positive integers $n > 1$ for which $2^{24}$ is a perfect $n^{\text{th}}$ power is equal to the number of divisors of $24$ that are greater than 1.

\item Step 6: List the divisors of $24$ greater than 1 and count them.

\item Step 7: The final answer is the count of such divisors.
\end{itemize}

\hrule 
\vspace{8pt}
\textbf{Output of Critic Agent:}\\
Here is a detailed plan for the problem:
\begin{description}
    \item[\textmd{Step 1: Analyze the problem}] \hfill \\
    We are asked to find how many positive integers $n > 1$ satisfy the condition that $2^{24}$ is a perfect $n^{\text{th}}$ power.

    \item[\textmd{Step 2: Recall the definition of a perfect $n^{\text{th}}$ power}] \hfill \\
    A number $x$ is a perfect $n^{\text{th}}$ power if there exists an integer $k$ such that $x = k^n$.

    \item[\textmd{Step 3: Express $2^{24}$ as a power of some base}] \hfill \\
    We can write $2^{24} = (2^m)^n$, where $m$ and $n$ are positive integers.

    \item[\textmd{Step 4: Find the possible values of $n$}] \hfill \\
    For $2^{24}$ to be a perfect $n^{\text{th}}$ power, $n$ must be a divisor of 24 (since $2^{24} = (2^m)^n$ implies $n$ divides 24).

    \item[\textmd{Step 5: Count the number of valid divisors}] \hfill \\
    Find the total number of positive divisors of 24, and then subtract 1 to account for the restriction that $n > 1$.
\end{description}

\hrule 
\vspace{8pt}
\textbf{Output of Solver Agent:}\\
To determine for how many positive integers \( n > 1 \) the number \( 2^{24} \) is a perfect \( n \)-th power, we need to find all the divisors of 24, since \( 2^{24} \) can be written as \[ (2^k)^n \], where \( kn = 24 \).
First, we list all the divisors of 24. The divisors of 24 are:
\[ 1, 2, 3, 4, 6, 8, 12, 24 \]

Each divisor \( d \) of 24 corresponds to a possible value of \( n \) such that \( 2^{24} \) is a perfect \( n \)-th power. Specifically, if \( d \) is a divisor of 24, then \( n = \frac{24}{d} \) will be an integer and \( 2^{24} = (2^{d})^{24/d} \).

Let's list the corresponding \( n \) values for each divisor \( d \):
- If \( d = 1 \), then \( n = \frac{24}{1} = 24 \)
- If \( d = 2 \), then \( n = \frac{24}{2} = 12 \)
- If \( d = 3 \), then \( n = \frac{24}{3} = 8 \)
- If \( d = 4 \), then \( n = \frac{24}{4} = 6 \)
- If \( d = 6 \), then \( n = \frac{24}{6} = 4 \)
- If \( d = 8 \), then \( n = \frac{24}{8} = 3 \)
- If \( d = 12 \), then \( n = \frac{24}{12} = 2 \)
- If \( d = 24 \), then \( n = \frac{24}{24} = 1 \) (but since \( n > 1 \), we exclude this case)

Thus, the possible values of \( n \) are \( 2, 3, 4, 6, 8, 12, 24 \). There are 7 such values.

Therefore, the number of positive integers \( n > 1 \) for which \( 2^{24} \) is a perfect \( n \)-th power is:
\( \boxed{7} \) \textcolor{impr}{\textit{Correct Answer} \ding{51}}

\end{tcolorbox}

\section{Examples of RecursiveMAS Across Different Downstream Tasks}
\label{app:case_study_question}

\begin{tcolorbox}[
  title={Case Study of \ours on Mathematical Reasoning Questions (AIME2026)},
  colframe=orange!60!black,     %
  colback=orange!5,             %
  colbacktitle=orange!70!black!80, %
  coltitle=white,               %
  fonttitle=\bfseries,
  width=\textwidth,
  left=0mm,
  right=0mm,
  boxrule=1pt,
  borderline={0.8pt}{0pt}{orange!50!black},
  top=1mm,
  bottom=1mm,
  arc=1mm
]
\footnotesize
\textbf{Question:}\\
A plane contains points $A$ and $B$ with $AB = 1$. Point $A$ is rotated in the plane counterclockwise through an acute angle $\theta$ around point $B$ to point $A^\prime$. Then $B$ is rotated in the plane clockwise through angle $\theta$ around point $A^\prime$ to point $B^\prime$. Suppose that $AB^\prime = \frac{4}{3}$. The value of $\cos \theta$ can be written as $\frac{m}{n}$ , where $m$ and $n$ are relatively prime positive integers. Find $m + n$.\\

\hrule 
\vspace{8pt}
\textbf{Recursion Round 1 (in latent space):}\\
\textbf{
\textit{Planner} $\quad\rightarrow\quad$ 
\textit{Critic} $\quad\rightarrow\quad$ 
\textit{Solver} $\quad\rightarrow\quad$ } \\

\hrule 
\vspace{8pt}

\textbf{Recursion Round 2 (in latent space):}\\
\textbf{
\textit{Planner} $\quad\rightarrow\quad$ 
\textit{Critic} $\quad\rightarrow\quad$ 
\textit{Solver} $\quad\rightarrow\quad$ } \\

\hrule 
\vspace{8pt}

\textbf{Recursion Round 3 (with final output):}\\
\textbf{
\textit{Planner} $\quad\rightarrow\quad$ 
\textit{Critic} $\quad\rightarrow\quad$ 
\textit{Solver}: }\\
To solve the problem, we need to analyze the geometric transformations and use the given distances to find the value of $\cos \theta$.

\begin{enumerate}[leftmargin=*]
    \item \textbf{Initial Setup:}
    Points $A$ and $B$ are in the plane with $AB = 1$. Point $A$ is rotated counterclockwise through an acute angle $\theta$ around point $B$ to point $A'$. Point $B$ is then rotated clockwise through the same angle $\theta$ around point $A'$ to point $B'$. We are given that $AB' = \frac{4}{3}$.

    \item \textbf{Coordinate System:}
    Place point $B$ at the origin $(0, 0)$ and point $A$ at $(1, 0)$.

    \item \textbf{Rotation of Point $A$:}
    When point $A$ is rotated counterclockwise by angle $\theta$ around point $B$, its new coordinates $A'$ are:
    $$
    A' = (1 \cos \theta, 1 \sin \theta) = (\cos \theta, \sin \theta).
    $$

    \item \textbf{Rotation of Point $B$:}
    Point $B$ is rotated clockwise by angle $\theta$ around point $A'$. The coordinates of $B'$ after this rotation can be found using:
    $$
    B' = A' + R_{-\theta}(B - A'),
    $$
    where $R_{-\theta}$ is the rotation matrix for clockwise rotation by $\theta$:
    $$
    R_{-\theta} = \begin{pmatrix}
    \cos \theta & \sin \theta \\
    -\sin \theta & \cos \theta
    \end{pmatrix}.
    $$
    The coordinates of $B - A'$ are $B - A' = (0 - \cos \theta, 0 - \sin \theta) = (-\cos \theta, -\sin \theta)$. Applying the rotation matrix:
    $$
    R_{-\theta}(B - A') = \begin{pmatrix}
    \cos \theta & \sin \theta \\
    -\sin \theta & \cos \theta
    \end{pmatrix} \begin{pmatrix}
    -\cos \theta \\
    -\sin \theta
    \end{pmatrix} = \begin{pmatrix}
    -\cos^2 \theta - \sin^2 \theta \\
    \cos \theta \sin \theta - \sin \theta \cos \theta
    \end{pmatrix} = \begin{pmatrix}
    -1 \\
    0
    \end{pmatrix}.
    $$
    Therefore, the coordinates of $B'$ are:
    $$
    B' = (\cos \theta, \sin \theta) + (-1, 0) = (\cos \theta - 1, \sin \theta).
    $$

    \item \textbf{Distance $AB'$:}
    The distance $AB'$ is given by:
    $$
    AB' = \sqrt{(\cos \theta - 1 - 1)^2 + (\sin \theta - 0)^2} = \sqrt{(\cos \theta - 2)^2 + \sin^2 \theta}.
    $$
    Simplifying inside the square root:
    $$
    AB' = \sqrt{\cos^2 \theta - 4 \cos \theta + 4 + \sin^2 \theta} = \sqrt{1 - 4 \cos \theta + 4} = \sqrt{5 - 4 \cos \theta}.
    $$
    We are given that $AB' = \frac{4}{3}$, so $\sqrt{5 - 4 \cos \theta} = \frac{4}{3}$. Squaring both sides gives $5 - 4 \cos \theta = \frac{16}{9}$. Solving for $\cos \theta$:
    $$
    5 - \frac{16}{9} = 4 \cos \theta \implies \frac{45}{9} - \frac{16}{9} = 4 \cos \theta \implies \frac{29}{9} = 4 \cos \theta \implies \cos \theta = \frac{29}{36}.
    $$

    \item \textbf{Final Answer:}
    The value of $\cos \theta$ is $\frac{29}{36}$. The sum of the numerator and the denominator is $m + n = 29 + 36 = 65$.
\end{enumerate}

Thus, the final answer is $\boxed{65}$. \textcolor{impr}{\textit{Correct Answer} \ding{51}}

\end{tcolorbox}

\begin{tcolorbox}[
  title={Case Study of \ours on Scientific and Medical Questions (AIME2026)},
  colframe=orange!60!black,     %
  colback=orange!5,             %
  colbacktitle=orange!70!black!80, %
  coltitle=white,               %
  fonttitle=\bfseries,
  width=\textwidth,
  left=0mm,
  right=0mm,
  boxrule=1pt,
  borderline={0.8pt}{0pt}{orange!50!black},
  top=1mm,
  bottom=1mm,
  arc=1mm
]
\footnotesize
\textbf{Question:}\\
A 45-year-old mechanic presents to the emergency department complaining of acute-onset shortness of breath while repairing a plowing tractor for his neighbor. The patient denies having any history of asthma or respiratory symptoms, and does not smoke. His temperature is 99.8°F (37.7°C), pulse is 65/min, blood pressure is 126/86 mmHg, and respirations are 20/min. His oxygen saturation is 97\%. On exam, he is pale and diaphoretic. His pupils are contracted. Diffuse wheezes are noted in all lung fields. What is the best treatment for his condition?\\
A. Succinylcholine\\
B. Inhaled ipratropium and oxygen\\
C. Atropine and pralidoxime\\
D. Inhaled albuterol and oxygen
\\

\hrule 
\vspace{8pt}
\textbf{Recursion Round 1 (in latent space):}\\
\textbf{
\textit{Planner} $\quad\rightarrow\quad$ 
\textit{Critic} $\quad\rightarrow\quad$ 
\textit{Solver} $\quad\rightarrow\quad$ } \\

\hrule 
\vspace{8pt}

\textbf{Recursion Round 2 (in latent space):}\\
\textbf{
\textit{Planner} $\quad\rightarrow\quad$ 
\textit{Critic} $\quad\rightarrow\quad$ 
\textit{Solver} $\quad\rightarrow\quad$ } \\

\hrule 
\vspace{8pt}

\textbf{Recursion Round 3 (with final output):}\\
\textbf{
\textit{Planner} $\quad\rightarrow\quad$ 
\textit{Critic} $\quad\rightarrow\quad$ 
\textit{Solver}: }\\
The clinical presentation is highly suggestive of organophosphate poisoning. 
Clinical Signs:
\begin{itemize}[leftmargin=*]
    \item \textbf{Dizziness/Shortness of breath}: Suggests respiratory distress.
    \item \textbf{Pupils are contracted (Miosis)}: This is a classic sign of cholinergic excess (SLUDGE mnemonic: Salivation, Lacrimation, Urination, Defecation, GI upset, Emesis).
    \item \textbf{Diffuse wheezes}: Caused by bronchorrhea and bronchospasm due to overstimulation in the lungs.
    \item \textbf{Pale and diaphoretic}: Diaphoresis is a key sign of cholinergic crisis.
    \item \textbf{Vitals}: The patient is tachypneic (20/min is borderline but in context of distress, this is significant) and hypoxic (implied by distress, though 97\% is maintained, the saturation will drop quickly without treatment).
\end{itemize}

Analysis of Options:
\begin{itemize}[leftmargin=*]
    \item \textbf{A. Succinylcholine}: Depolarizing neuromuscular blocker. Exacerbates or prolongs paralysis due to high endogenous acetylcholine; does not resolve the underlying cholinergic crisis.
    \item \textbf{B. Inhaled ipratropium and oxygen}: Anticholinergic. Relieves local respiratory symptoms but fails to address the root cause and systemic cholinergic toxicity.
    \item \textbf{C. Atropine and pralidoxime}: \textbf{Correct treatment}. Atropine (muscarinic antagonist) is life-saving, reversing severe secretions and bronchospasm. Pralidoxime reactivates acetylcholinesterase to resolve nicotinic effects.
    \item \textbf{D. Inhaled albuterol and oxygen}: Beta-2 agonist. May help bronchospasm but ineffective against massive bronchorrhea and systemic poisoning.
\end{itemize}

\textbf{Conclusion}:
The patient is suffering from organophosphate poisoning (cholinergic crisis). The standard of care involves the administration of atropine to reverse muscarinic effects (especially respiratory secretions and bronchospasm) and pralidoxime to regenerate acetylcholinesterase. So the final answer is:  
$\boxed{C}$ \textcolor{impr}{\textit{Correct Answer} \ding{51}}

\end{tcolorbox}

\begin{tcolorbox}[
  title={Case Study of \ours on Code Generation Tasks (MBPP Plus)},
  colframe=orange!60!black,     %
  colback=orange!5,             %
  colbacktitle=orange!70!black!80, %
  coltitle=white,               %
  fonttitle=\bfseries,
  width=\textwidth,
  left=0mm,
  right=0mm,
  boxrule=1pt,
  borderline={0.8pt}{0pt}{orange!50!black},
  top=1mm,
  bottom=1mm,
  arc=1mm
]
\footnotesize
\textbf{Question:}\\
Write a function to check if each element of second tuple is smaller than its corresponding element in the first tuple. \\

\hrule 
\vspace{8pt}
\textbf{Recursion Round 1 (in latent space):}\\
\textbf{
\textit{Planner} $\quad\rightarrow\quad$ 
\textit{Critic} $\quad\rightarrow\quad$ 
\textit{Solver} $\quad\rightarrow\quad$ } \\

\hrule 
\vspace{8pt}

\textbf{Recursion Round 2 (in latent space):}\\
\textbf{
\textit{Planner} $\quad\rightarrow\quad$ 
\textit{Critic} $\quad\rightarrow\quad$ 
\textit{Solver} $\quad\rightarrow\quad$ } \\

\hrule 
\vspace{8pt}

\textbf{Recursion Round 3 (with final output):}\\
\textbf{
\textit{Planner} $\quad\rightarrow\quad$ 
\textit{Critic} $\quad\rightarrow\quad$ 
\textit{Solver}: }\\
To solve this problem, we need to write a function that takes two tuples as input and checks if each element of the second tuple is smaller than its corresponding element in the first tuple. If all elements satisfy this condition, the function should return True; otherwise, it should return False.

Here is the code logic:
\begin{enumerate}[leftmargin=*]
    \item Define the function `check\_smaller` that takes two tuples as input.
    \item Check if the lengths of the two tuples are equal. If they are not, return False.
    \item Iterate over the elements of the tuples and check if each element of the second tuple is smaller than its corresponding element in the first tuple.
    \item If all elements satisfy the condition, return True; otherwise, return False.
\end{enumerate}
Here is the Python code for the solution:

\vspace{-10pt}
\begin{verbatim}
def check_smaller(tuple1, tuple2):
    # Check if the lengths of the tuples are equal
    if len(tuple1) != len(tuple2):
        return False
    
    # Iterate over the elements of the tuples
    for i in range(len(tuple1)):
        if tuple1[i] <= tuple2[i]:
            return False
    
    # If all elements satisfy the condition, return True
    return True
\end{verbatim}\\
\textcolor{impr}{\textit{Correct Answer} \ding{51}}
\end{tcolorbox}

\begin{tcolorbox}[
  title={Case Study of \ours on Search Based Tasks (Bamboogle)},
  colframe=orange!60!black,     %
  colback=orange!5,             %
  colbacktitle=orange!70!black!80, %
  coltitle=white,               %
  fonttitle=\bfseries,
  width=\textwidth,
  left=0mm,
  right=0mm,
  boxrule=1pt,
  borderline={0.8pt}{0pt}{orange!50!black},
  top=1mm,
  bottom=1mm,
  arc=1mm
]
\footnotesize
\textbf{Question:}\\
Who was president of the United States in the year that Citibank was founded? \\

\hrule 
\vspace{8pt}
\textbf{Recursion Round 1 (in latent space):}\\
\textbf{
\textit{Planner} $\quad\rightarrow\quad$ 
\textit{Critic} $\quad\rightarrow\quad$ 
\textit{Solver} $\quad\rightarrow\quad$ } \\

\hrule 
\vspace{8pt}

\textbf{Recursion Round 2 (in latent space):}\\
\textbf{
\textit{Planner} $\quad\rightarrow\quad$ 
\textit{Critic} $\quad\rightarrow\quad$ 
\textit{Solver} $\quad\rightarrow\quad$ } \\

\hrule 
\vspace{8pt}

\textbf{Recursion Round 3 (with final output):}\\
\textbf{
\textit{Planner} $\quad\rightarrow\quad$ 
\textit{Critic} $\quad\rightarrow\quad$ 
\textit{Solver}: }\\
To answer this question, I need to find out the year Citibank was founded and then identify the President of the United States in that year.\\
First, I will search for the founding year of Citibank.\\

\texttt{<search>}When was Citibank founded?\texttt{</search>}\\
\texttt{<result>} \\
{}[1] Citigroup - Wikipedia\\
Citibank was chartered by the State of New York on June 16, 1812, with \$2 million of capital. $\cdots$\\
{}[2] What is Brief History of Citi Company? - Matrix BCG\\
The genesis of Citi, originally named the City Bank of New York, dates back to June 16, 1812, when it received its charter in New York City. The bank commenced operations on September 14, 1812. $\cdots$\\
{}[3] Citibank - Students | Britannica Kids | Homework Help\\
Second largest bank in U.S., with about 270 branch banks in New York City area. Present name since 1976. $\cdots$\\
{}[4] A bank with history and tradition - Citi.com\\
Citibank opened for business in 1812 in New York under its original name City Bank of New York. Since then, Citi has been a major player in some of the world's most innovative breakthroughs in banking. $\cdots$\\
\texttt{</result>}\\

Based on the search results, Citibank (originally City Bank of New York) was founded in \textbf{1812}. Specifically, it was chartered on June 16, 1812, and opened for business on September 14, 1812.
Now I need to find out who was the President of the United States in 1812.\\

\texttt{<search>}President of the United States 1812\texttt{</search>}\\
\texttt{<result>} \\
{}[1] The Election of 1812 | American Battlefield Trust\\
On October 14, 1812, incumbent president, James Madison, wrote to his friend and former president, $\cdots$\\
{}[2] The War of 1812: The Nation Braces for War\\
James Madison (1751-1836) James Madison, the Father of the Constitution and the Bill of Rights, was President of the United States during the War of 1812. $\cdots$\\
{}[3] Key American Personalities | 1812 Virtual Exhibition\\
As secretary of state under President Thomas Jefferson, future president James Madison had been deeply involved in the events leading to the war. $\cdots$\\
{}[4] James Madison - Wikipedia\\
Madison was elected president in 1808. Motivated by a desire to acquire land held by Britain, Spain, and Native Americans, Madison led the United States into the War of 1812. $\cdots$\\
\texttt{</result>}\\

The search results indicate that James Madison was the President of the United States in 1812.
So, the answer is $\boxed{James\ Madison}$ \textcolor{impr}{\textit{Correct Answer} \ding{51}}

\end{tcolorbox}

\end{document}